\documentclass[11pt,letterpaper]{article}

\usepackage[margin=1in]{geometry}
\usepackage[T1]{fontenc}
\usepackage[utf8]{inputenc}
\usepackage{lmodern}
\usepackage{microtype}

\usepackage{amsmath,amssymb,amsfonts,amsthm,mathtools,bm}
\usepackage{array,textcomp}
\theoremstyle{plain}
\newtheorem{theorem}{Theorem}

\newtheorem{proposition}{Proposition}

\theoremstyle{definition}

\theoremstyle{remark}

\usepackage{algorithm}
\usepackage{algorithmic}
\usepackage{graphicx}
\usepackage{booktabs}
\usepackage{tikz}
\usetikzlibrary{shapes,arrows,3d,positioning,calc,spy}
\usepackage{pgfplots}
\usepgfplotslibrary{groupplots}
\pgfplotsset{compat=1.18}
\usepackage[font=small,labelfont=bf]{caption}

\usepackage[authoryear,round]{natbib}
\usepackage{xurl}
\usepackage[hidelinks]{hyperref}
\hypersetup{
  pdftitle={Robust conditional dimension reduction for dissimilarity data},
  pdfauthor={Xiao Ling and Anh T. Bui},
  pdfsubject={Robust conditional multidimensional scaling},
  pdfkeywords={distance scaling, ISOMAP, multidimensional scaling, SMACOF,
    dissimilarity, robust, Sammon mapping}
}

\newlength{\PlotBaseWidth}
\newcommand{\dashbigarc}[5]{%
  \draw[dotted, ->]
  plot[
    domain=#4:#5,
    samples=150,
    variable=\t
  ]
  (axis cs:{#1 + #3*cos(\t)}, {#2 + #3*sin(\t)});
}

\allowdisplaybreaks[2]
\title{\Large\bfseries Robust conditional dimension reduction\\
for dissimilarity data}
\author{%
  Xiao Ling$^{1}$ \qquad Anh T. Bui$^{2}$\\[0.7em]
  {\small $^{1}$Department of Mathematics, Auburn University at Montgomery}\\
  {\small Montgomery, AL 36117, USA}\\
  {\small \href{mailto:xling@aum.edu}{\texttt{xling@aum.edu}}}\\[0.5em]
  {\small $^{2}$Department of Mathematics and Statistics, Virginia Commonwealth University}\\
  {\small Richmond, VA 23284, USA}\\
  {\small \href{mailto:buiat2@vcu.edu}{\texttt{buiat2@vcu.edu}}}%
}
\date{}

\begin{document}
\maketitle

\begin{abstract}
Conditional dimension reduction (cDR) learns low-dimensional latent coordinates while accounting for observed covariates that represent known sources of variation in the data. Conditional Multidimensional Scaling (cMDS) is a cDR technique that works directly with dissimilarity data. Its standard squared-stress formulation, however, is sensitive to contaminated dissimilarity, since outliers can dominate the objective and distort the learned configuration. We proposed Robust Conditional Multidimensional Scaling (rcMDS) by replacing the squared-stress criterion with a Fair M-estimation objective. We developed a reweighted conditional SMACOF algorithm to optimize this objective. The proposed algorithm admits computationally tractable updates, and its stabilized objective values decrease monotonically and converge to a finite limit. Experiments on synthetic and real data show that the proposed method is substantially less sensitive to corrupted data than cMDS and produces more reliable low-dimensional representations.

\medskip
\noindent\textbf{Keywords:} Distance scaling; ISOMAP; multidimensional scaling;
SMACOF; dissimilarity; robust; Sammon mapping.
\end{abstract}

\section{Introduction}
In many applications, practitioners prefer to work with pairwise dissimilarity measures of subjects of interest, rather than their coordinate map, for several reasons. First, it is easier to obtain pairwise dissimilarities than coordinates. For example, psychology participants can more easily judge how similar two stimuli are than identifying their spatial representation~\citep{jaworska2009review}; sports analysts can directly quantify the dissimilarities between teams from round-by-round match results~\citep{machado2017multidimensional} instead of looking at all possible teams' features. Second, the Euclidean distance is implied when using coordinate maps, and this can be too restrictive. In contrast, pairwise dissimilarity measures give practitioners more flexibility in comparing subjects. For instance, ecologists define dissimilarity indices from species abundance data~\citep{oksanen2013vegan}; materials scientists define dissimilarity measures for images of random heterogeneous media, whose random nature is problematic for the Euclidean distance~\citep{Bui27052021}. 

Regardless of the reason, practitioners ultimately want to obtain a low-dimensional coordinate representation for the subjects from their pairwise dissimilarities for subsequent analyses. Multidimensional scaling (MDS)~\citep{borg2025multidimensional} is one of the earliest fundamental techniques developed to achieve this goal. In essence, MDS produces a low-dimensional configuration that preserves the observed dissimilarities. In mathematical terms, the problem is as follows: given a pairwise dissimilarity matrix $\Delta\in\mathbb{R}^{n\times n}$ for $n$ subjects, we seek an configuration $\mathbf{X}\in\mathbb{R}^{n\times p}$ whose pairwise Euclidean distance $\mathbf{D}\in\mathbb{R}^{n\times n}$ approximates $\Delta$ as closely as possible, with $p<n$. We can understand $p$ as the number of underlying features that control the dissimilarities among the $n$ subjects.

In many applications, some underlying features are known. For example, surface measurements of cylinders machined by lathe turning~\citep{colosimo2011analyzing} can differ systematically in part due to preset cutting depth and turning speed. Accurate mapping and localization in wireless sensor networks are central to applications ranging from surveillance and navigation to wireless communications; accordingly, various MDS-based methods have been proposed to improve sensor-node localization accuracy~\citep{dai2023mds,zou2023multidimensional,luo2025hierarchical,saeed2019state,zhou2022mocloc,kashniyal2017wireless,corbalan2023self}. More recently, \citet{10371783} introduced conditional MDS, which extends classical MDS by incorporating known features as prior information during the configuration stage. This framework recovers coordinates associated only with the unknown features while accounting for the influence of the known ones, thereby improving the quality of the low-dimensional representation and facilitating interpretation.

However, standard conditional MDS is based on a squared-stress criterion and is therefore sensitive to contaminated dissimilarities. Because squared residuals penalize large deviations quadratically, a small number of corrupted pairwise dissimilarities can dominate the objective and distort the recovered configuration. This sensitivity is particularly problematic for proximity data, where an error in a single dissimilarity may affect the low-dimensional representation. In classical MDS, for example, double centering converts the squared dissimilarity matrix into a centered inner product matrix by subtracting row and column means. Thus, a perturbation in one dissimilarity entry can be propagated across many
entries of the transformed matrix~\citep{deng2025robust,cayton2006robust}. These limitations motivate the use of robust residual criteria. Although the least-absolute ($\ell_1$) loss provides a natural alternative by penalizing large residuals linearly, its direct iterative reweighted formulation leads to singular weights near zero residuals. In this paper, we propose robust conditional multidimensional scaling (rcMDS), which replaces the squared-stress loss with a scaled Fair M-estimation criterion. The resulting objective retains the robustness of absolute-residual fitting while yielding finite residual-dependent weights, and we develop a Robust Conditional SMACOF algorithm (rcSMACOF) to optimize it while preserving the separation between known and unknown sources of variation.

Section~\ref{sec:preliminaries} presents the preliminaries and problem formulation, including the notation, assumptions, NP-hardness result, and objective-value convergence analysis for the proposed algorithm. In Section \ref{sec:rcsmacof} we describe the robust conditional SMACOF (\textit{Scaling by Majorizing a Complicated Function}) approach based on conditional SMACOF proposed in \citet{10371783}. Section \ref{sec:experiment} presents experimental results that validate our approach. 

\section{Preliminaries and Problem Formulation} \label{sec:preliminaries}
Assume that \(\Delta=(\delta_{ij})_{i,j=1}^n\in[0,\infty)^{n\times n}\) is a symmetric dissimilarity matrix with \(\delta_{ii}=0\), and that \(\mathbf W=(w_{ij})_{i,j=1}^n\in[0,\infty)^{n\times n}\) is a symmetric nonnegative weight matrix. We assume that \(\mathbf W\) is irreducible, meaning that the \(n\) objects cannot be partitioned into two nonempty groups for which all intergroup weights are zero. If the raw pairwise dissimilarities are asymmetric, we symmetrize them by replacing each pair with the average of the two directed dissimilarities. Let
\[
\mathcal E=\{(i,j):1\le i<j\le n,\; w_{ij}>0\}
\]
denote the active weighted pair set. 

Suppose each object is associated with an observed covariate vector \(\mathbf v_i\in\mathbb R^q\), and let
\[
\mathbf V=[\mathbf v_1^\top;\ldots;\mathbf v_n^\top]\in\mathbb R^{n\times q}
\]
denote the matrix of prior information. We assume that \(\mathbf V\) contains \(q\) linearly independent pairwise difference vectors \(\mathbf v_i-\mathbf v_j\), \((i,j)\in\mathcal E\).

Given $\Delta$ and $\mathbf{V}$, conditional multidimensional scaling (cMDS, \citealp{10371783}) aims to find a latent configuration \(\mathbf U=[\mathbf u_1^\top;\ldots;\mathbf u_n^\top]\in\mathbb R^{n\times p}\), with \(\mathbf u_i\in\mathbb R^p\). Because the intrinsic dimension of the observed data is at most $n-1$, it is required that $n>p+q$. To make the unknown features $\mathbf{u}$ and the known features $\mathbf{v}$ compatible in a single manifold coordinate system, an affine transformation $\mathbf{B}^\top \mathbf{v}$, where \(\mathbf B\in\mathbb R^{q\times q}\), is needed. Thus, $[\mathbf{U}, \mathbf{VB}]\in \mathbb R^{n\times (p+q)}$ contain all the manifold feature values of the $n$ objects. 

For brevity, we write \(k=(i,j)\in\mathcal E\), and use \(\delta_k=\delta_{ij}\), \(w_k=w_{ij}\), and \(r_k(\mathbf X)=r_{ij}(\mathbf X)\). Let \(\mathbf X=(\mathbf U,\mathbf B)\), cMDS estimates \(\mathbf X\) by minimizing the \emph{conditional stress}
\begin{equation}
\begin{aligned}
f(\mathbf X) &= \sum_k w_k\,\rho\bigl(r_k(\mathbf X)\bigr),  && \rho(t)=t^2,\\
r_k(\mathbf X) &= \delta_k-d_k(\mathbf X), &&d_k(\mathbf X) = \sqrt{
        \lVert \mathbf u_i-\mathbf u_j\rVert_2^2 +  \left\lVert
        \mathbf B^\top(\mathbf v_i-\mathbf v_j)
        \right\rVert_2^2
    }.
\end{aligned}
\label{eq:f1}
\end{equation}
To minimize \eqref{eq:f1}, Bui~\citeyearpar{10371783} developed a Conditional SMACOF procedure by extending the majorization framework of SMACOF~\citep{de2009multidimensional}. However, \(\rho(r)\) remains sensitive to unusual residuals because it is based on squared conditional stress. A natural robust alternative is to choose $\rho(r) = |r|$ criterion, which penalizes large deviations linearly and can be optimized through an Iteratively Reweighted Least Squares (IRLS) paradigm~\citep{holland1977robust}. Related \(\ell_1\)-based formulations have also been developed for robust
dimension reduction. In particular, \cite{ling26} estimate a sparse, outlier-insensitive one-dimensional subspace. In contrast, the present work operates directly on pairwise dissimilarities and incorporates observed covariates through the conditional-distance model.

\subsection{\texorpdfstring{$\ell_1$}{l1}-norm cMDS} 
A robust low-dimensional configuration can be obtained by solving the following unconstrained optimization problem:
\begin{equation}
\min_{\mathbf{X}}
\sigma(\mathbf{X})
=
\min_{\mathbf{X}}
\sum_{k}
w_{k}
\left|
r_{k}(\mathbf{X})
\right|.
\label{eq:lcmds}
\end{equation}
We refer to \(\sigma(\cdot)\) as the \textit{conditional strife}, based on the terminology introduced in \citet{Critchley_1989}. It is nonlinear and nonconvex in the joint variables \((\mathbf U,\mathbf B)\), since the fitted interpoint distances \(d_{k}(\mathbf U,\mathbf B)\) depend nonlinearly on both the latent coordinates and the covariate transformation~\citep{dattorro2010convex}. Computing a globally optimal low-dimensional configuration is NP-hard in general~\citep{cayton2006robust}.  

To handle the computational difficulty of \(\ell_1\)-cMDS, we first develop a new majorization-based algorithm that converts the nonsmooth conditional strife into a sequence of fixed-weight conditional stress problems. The key idea is to construct, at each iteration, a quadratic surrogate of the conditional strife by updating residual-dependent weights. The \textit{conditional strife} can be majorized by an iteratively reweighted quadratic function~\citep{heiser1988multidimensional}. Specifically, in the SMACOF majorization, the current iterate is used as a
support point around which the surrogate function is constructed.  Let
\(\mathbf Y=(\widetilde{\mathbf U},\widetilde{\mathbf B})\) denote this support
point, where \(\widetilde{\mathbf U}\) and \(\widetilde{\mathbf B}\) are the
current values of the latent configuration and covariate transformation,
respectively. Recall that we write \(k=(i,j)\) for an active pair
\((i,j)\in\mathcal E\), and following this notation, if \(r_k(\mathbf Y)\neq0\) for all active pairs, then using the standard quadratic majorization of the absolute loss
\citep{heiser1988multidimensional}, we majorize the conditional strife
\(\sigma(\mathbf X)\) by
\[
Q(\mathbf X|\mathbf Y)
=
\frac12
\sum_k
q_k(\mathbf Y)
\left\{
r_k(\mathbf X)^2+r_k(\mathbf Y)^2
\right\},
\]
where
\[
q_k(\mathbf Y)=\frac{w_k}{|r_k(\mathbf Y)|}.
\]
Minimizing \(Q(\mathbf X|\mathbf Y)\) is equivalent to minimizing the fixed-weight conditional
stress
\[
\sum_k q_k(\mathbf Y) r_k(\mathbf X)^2.
\]
This fixed-weight subproblem has the same form as the standard conditional MDS stress and can therefore be solved using Conditional SMACOF~\citep{10371783}. This observation connects the proposed construction to the standard iteratively reweighted least-squares (IRLS) framework. In IRLS, a nonsquared residual loss is handled by repeatedly forming a weighted least-squares surrogate; the weights are computed from the residuals at the current iterate and then held fixed during the next least-squares update.

The choice of residual loss determines the resulting influence and weight functions. The least-absolute loss has bounded influence, but its IRLS weight becomes singular at zero residuals, which creates numerical and theoretical difficulties for the reweighted algorithms~\citep{kalczynski2025further,mankovich2022flag,aftab2015convergence,peng2023convergence}. We therefore use the Fair loss as stable robust surrogate for the \textit{conditional strife} criterion. The Fair loss is a classical M-estimator~\citep{de2021review} and has been used in robust regression and IRLS computation. It behaves approximately like the $\ell_1$-norm for large anomalies, but smooths the cusp at the origin (zero residuals). Fig.~\ref{fig:rho_psi_weight} summarizes this relationship for the squared, least-absolute, and Fair losses.    

\begin{figure}[htpb]
\centering
\begin{tikzpicture}
\pgfmathsetmacro{\etafair}{0.35}

\begin{groupplot}[
    group style={
        group size=3 by 3,
        horizontal sep=0.045\textwidth,
        vertical sep=0.8cm,
    },
    width=0.27\textwidth,
    height=0.19\textwidth,
    scale only axis,
    axis lines=middle,
    axis line style={->},
    ytick=\empty,
    xtick=\empty,
    samples=180,
    domain=-3:3,
    title style={font=\small},
    label style={font=\small},
    clip=false
]
\nextgroupplot[
    title={$r^2$},
    ylabel={\(\rho\)},
    ymin=0,
    ymax=4.8
]
\addplot[black, line width=0.45pt, no marks] {0.5*x^2};

\nextgroupplot[
    title={$|r|$},
    ymin=0,
    ymax=4.8
]
\addplot[black, line width=0.45pt, no marks] {abs(x)};

\nextgroupplot[
    title={Fair},
    ymin=0,
    ymax=4.8
]
\addplot[black, line width=0.45pt, no marks]
    {abs(x)-\etafair*ln(1+abs(x)/\etafair)};

\nextgroupplot[
    ylabel={\(\psi\)},
    ymin=-3.2,
    ymax=3.2
]
\addplot[black, line width=0.45pt, no marks] {x};

\nextgroupplot[
    ymin=-1.4,
    ymax=1.4
]
\addplot[
    black,
    line width=0.45pt,
    no marks,
    domain=-3:-0.03
] {-1};
\addplot[
    black,
    line width=0.45pt,
    no marks,
    domain=0.03:3
] {1};
\addplot[
    black,
    only marks,
    mark=o,
    mark size=0.8pt
] coordinates {(0,-1) (0,1)};

\nextgroupplot[
    ymin=-1.4,
    ymax=1.4
]
\addplot[black, line width=0.45pt, no marks]
    {x/(abs(x)+\etafair)};

\nextgroupplot[
    ylabel={\(w\)},
    ymin=0,
    ymax=4.2
]
\addplot[black, line width=0.45pt, no marks] {1};

\nextgroupplot[
    ymin=0,
    ymax=4.2
]
\addplot[
    black,
    line width=0.45pt,
    no marks,
    domain=-3:-0.25
] {1/abs(x)};
\addplot[
    black,
    line width=0.45pt,
    no marks,
    domain=0.25:3
] {1/abs(x)};
\draw[dashed, line width=0.30pt]
    (axis cs:0,0) -- (axis cs:0,4.0);
\node[font=\small, anchor=south]
    at (axis cs:0,4.0) {\(\infty\)};

\nextgroupplot[
    ymin=0,
    ymax=4.2
]
\addplot[black, line width=0.45pt, no marks]
    {1/(abs(x)+\etafair)};

\end{groupplot}
\end{tikzpicture}

\caption{Loss (\(\rho\)), influence (\(\psi\)), and weight
(\(w\)) functions for least-squares, least-absolute, and Fair
M-estimators.}
\label{fig:rho_psi_weight}
\end{figure}

This leads to a numerically stable iteratively reweighted framework. Specifically, we formulate the robust conditional MDS (rcMDS) problem as follows:
\begin{equation}
\begin{aligned}
\min_{\mathbf X}\sigma_\eta(\mathbf X)
&=
\min_{\mathbf X}
\sum_{k}
w_{k}\rho\left(r_{k}(\mathbf X)\right),\\
\rho(r_{k}(\mathbf X))
&=
|r_{k}(\mathbf X)|-\eta\ln\left(1+\frac{|r_{k}(\mathbf X)|}{\eta}\right),
\quad \eta>0.
\end{aligned}
\label{eq:rcMDS}
\end{equation}

\begin{theorem}[NP-hardness of rcMDS]
\label{thm:rcmds_nphard}
For any fixed \(\eta>0\), globally minimizing the rcMDS objective \eqref{eq:rcMDS} is NP-hard. The result remains valid in the restricted
setting \(p=1\), for any fixed \(q\geq1\), with
\(\mathbf v_i=\mathbf0\in\mathbb R^q\) for all \(i\) and
\(w_{ij}=1\) for all \(1\leq i<j\leq n\).
\end{theorem}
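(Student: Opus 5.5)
The plan is to reduce from the unweighted one-dimensional $\ell_1$ (strife) MDS problem, which \citet{cayton2006robust} show is NP-hard by a reduction from \textsc{Partition}. In the restricted setting of the statement, $\mathbf v_i=\mathbf 0$ makes $\mathbf B$ irrelevant, and $d_{ij}=|u_i-u_j|$ with $u_i\in\mathbb R$. The weight matrix $w_{ij}\equiv1$ is irreducible. The assumption on $\mathbf V$ fails, but for a hardness statement only the objective matters. The rcMDS objective is then
\[
\sigma_\eta(\mathbf U)=\sum_{i<j}\rho_\eta\bigl(\delta_{ij}-|u_i-u_j|\bigr),
\]
which is exactly an M-estimation version of 1D metric MDS. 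I would not try a zero-objective (exact realizability) reduction. With all pairs weighted, exact realizability of a full distance matrix in $\mathbb R^1$ is polynomial-time. The hardness must therefore come from a threshold strictly above zero, and it has to be transferred from $|\cdot|$ to $\rho_\eta$.

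\textbf{Step 1 (gap version of the $\ell_1$ problem).} Take integer inputs $\delta_{ij}$ with $\max\delta_{ij}\le M$. I will check that the Cayton--Dasgupta reduction yields a gap: a YES instance has $\sigma(\mathbf U^\star)\le K$, and a NO instance has $\sigma(\mathbf U)\ge K+\gamma$ for all $\mathbf U$. Here $\gamma\ge 1/\mathrm{poly}$, and in fact $\gamma\ge1$ is plausible for integer data. The justification is that once the ordering of the points on the line is fixed, minimizing the strife is a linear program with integer data of polynomial size. Its optimal value is therefore a rational number whose denominator is bounded by $2^{\mathrm{poly}}$ and is computable from the instance. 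Hence the YES and NO optimal values are separated by an explicit $\gamma$ of polynomial bit length.

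\textbf{Step 2 (scaling).} Replace $\Delta$ by $s\Delta$ for a scale factor $s$ to be chosen. Substituting $\mathbf U\mapsto s\mathbf U$ gives
\[
\min_{\mathbf U}\sigma_\eta(\mathbf U;s\Delta)=\min_{\mathbf U}\sum_{i<j}\rho_\eta\bigl(s\,r_{ij}(\mathbf U)\bigr).
\]
For every $t$ we have
\[
s|t|-\eta\ln\!\left(1+\frac{s|t|}{\eta}\right)\le\rho_\eta(st)\le s|t|.
\]
Two consequences follow.
\begin{itemize}
\item Upper bound: $\min\sigma_\eta\le s\,\min\sigma$.
\item Lower bound: any minimizer can be taken with all residuals bounded by $\mathrm{poly}(n)M$, for example by first translating and clamping the configuration into $[0,nM]$, which does not increase the objective. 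Consequently $\min\sigma_\eta\ge s\,\min\sigma-\binom n2\eta\ln(1+s\,\mathrm{poly}(n)M/\eta)$.
\end{itemize}

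\textbf{Step 3 (choice of $s$).} Choose $s$ so that
\[
\binom n2\eta\ln\!\left(1+\frac{s\,\mathrm{poly}(n)M}{\eta}\right)<\frac{s\gamma}{2}.
\]
The right-hand side grows linearly in $s$ and the left-hand side only logarithmically, so such an $s$ exists with bit length polynomial in the input. Then the decision question ``is $\min\sigma_\eta\le sK+s\gamma/2$?'' answers the $\ell_1$ YES/NO question. Hence global minimization of \eqref{eq:rcMDS} is NP-hard for every fixed $\eta>0$, in the stated restricted setting, and for any $q$, since $q$ plays no role once $\mathbf v_i=\mathbf 0$.

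The main obstacle I expect is Step 1: establishing an explicit, polynomially bounded gap $\gamma$ in the $\ell_1$ reduction, and justifying the a priori bound on residuals at the optimum used in Step 2. If the cited reduction does not make the gap transparent, I would rederive it directly from \textsc{Partition}, using the fixed-order LP structure to bound the denominators. Alternatively, I would redo the \textsc{Partition} gadget directly for $\rho_\eta$ with scaled integers, using the same sandwich inequality in place of the exact $\ell_1$ values.
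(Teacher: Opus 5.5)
Your proof is correct, but it takes a different route from the paper's. The paper never passes through the $\ell_1$ problem. It invokes the general one-dimensional hardness theorem of Cayton--Dasgupta for objectives $\sum g(f(\delta_{ij})-f(|x_i-x_j|))$ and verifies its hypotheses directly for $f=|\cdot|$, $g=\rho_\eta$: since $\rho_\eta'(t)=t/(t+\eta)\ge t/(1+\eta)$ on $[0,1]$, one gets $\lambda_L=1/(2(1+\eta))$. It then restricts to $p=1$, $\mathbf v_i=\mathbf 0$, $w_{ij}=1$ exactly as you do. Your remark that the spanning assumption on $\mathbf V$ fails there applies equally to the paper's proof.

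You instead use only the $\ell_1$ case of that theorem and transfer it through the identity $\rho_\eta(st)=s\,\rho_{\eta/s}(t)$. After scaling, the Fair objective equals $s$ times the strife up to an additive $\binom n2\eta\ln(1+snM/\eta)$, which a gap of size $s\gamma$ swamps. The paper's route is shorter and needs neither a gap nor an a priori residual bound, but it rests entirely on the black-box hypotheses of the general theorem. Yours is more self-contained and makes the decision threshold explicit. It also applies to any loss nondecreasing in $|t|$ and within a sublinear additive error of $|t|$, such as Huber or pseudo-Huber.

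The two lemmas you flagged close cleanly. For the gap, fix the order of the points; the fitted distances are then $A\mathbf g$, where $\mathbf g\ge\mathbf 0$ are the consecutive gaps and $A$ is a consecutive-ones matrix. An optimal basic solution of the strife LP determines $\mathbf g$ through a nonsingular square subsystem of the totally unimodular matrix $[A;I]$. So $\mathbf g$ is integral, and the one-dimensional $\ell_1$ optimum is an integer whenever $\Delta$ is integer. Rounding the threshold down to an integer gives $\gamma\ge1$, so $s=\mathrm{poly}(n,\log M)$ suffices.

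For the residual bound, do not literally clamp points into an interval. Projection can collapse pairs and enlarge residuals of pairs with $d_{ij}<\delta_{ij}$. Instead, shrink every consecutive gap larger than $M$ down to $M$. Any pair spanning such a gap keeps $d_{ij}\ge M\ge\delta_{ij}$, so its $|r_{ij}|$ only decreases. Afterwards the span is at most $(n-1)M$, and so are all residuals.
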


\begin{proof}
Section~5.1.1 of \cite{cayton2006robust} establishes NP-hardness of the
one-dimensional embedding problem
\[
\min_{x_1,\ldots,x_n\in\mathbb R}
\sum_{i,j}
g\!\left(f(\delta_{ij})-f(|x_i-x_j|)\right)
\]
when \(f\) and \(g\) are symmetric, nondecreasing in the absolute values
of their arguments, Lipschitz on \([0,1]\), and satisfy, for some
\(\lambda_L>0\), $|h(x)-h(y)|\geq\lambda_L |x-y|\max\{x,y\}, x,y\in[0,1], h\in\{f,g\}.$ 
Set
\[
f(t)=|t|,
\qquad
g(t)=\rho_\eta(t)
=
|t|-\eta\ln\left(1+\frac{|t|}{\eta}\right).
\]
Both functions are symmetric and nondecreasing in absolute value. Moreover, both \(f\) and \(g\) are \(1\)-Lipschitz on \([0,1]\), so the upper-Lipschitz condition holds with \(\lambda_U=1\). For \(0\leq y\leq x\leq1\), we have 
\[
|g(x)-g(y)|
\geq
\frac{1}{2(1+\eta)}
|x-y|\max\{x,y\}
\]
and,
\[
|f(x)-f(y)|
=
|x-y|
\geq
\frac{1}{2(1+\eta)}
|x-y|\max\{x,y\}.
\]
Thus the conditions of \cite{cayton2006robust} hold with $\lambda_U=1, \lambda_L=\frac{1}{2(1+\eta)}>0,$ and the corresponding one-dimensional Fair-loss embedding problem is NP-hard.

Given such an instance, construct an rcMDS instance using the same
dissimilarities and set $p=1, \mathbf v_i=\mathbf0, w_{ij}=1.$ 
Then the covariate contribution vanishes and
\[
d_{ij}(\mathbf U,\mathbf B)
=
\sqrt{|u_i-u_j|^2+
\|\mathbf B^\top(\mathbf v_i-\mathbf v_j)\|^2}
=
|u_i-u_j|.
\]
Because \(\delta_{ij}\) and \(|u_i-u_j|\) are nonnegative,$g\!\left(f(\delta_{ij})-f(|u_i-u_j|)\right)=\rho_\eta\!\left(\delta_{ij}-d_{ij}(\mathbf U,\mathbf B)\right).$ 
The restricted rcMDS objective is the unordered-pair form of the NP-hard one-dimensional Fair-loss embedding objective. Since the reduction is polynomial, restricted rcMDS, and therefore general rcMDS is NP-hard.
\end{proof}

\begin{theorem}[Quadratic majorization of the rcMDS objective]
\label{thm:stabilized_majorization}
Fix \(\mathbf Y=(\widetilde{\mathbf U},\widetilde{\mathbf B})\) and define \(q_k(\mathbf Y)=w_k/(|r_k(\mathbf Y)|+\eta)\). Then \(Q_\eta(\mathbf X\mid\mathbf Y):=\sigma_\eta(\mathbf Y)+\frac{1}{2}\sum_k q_k(\mathbf Y)\bigl[r_k(\mathbf X)^2-r_k(\mathbf Y)^2\bigr]\) satisfies \(\sigma_\eta(\mathbf X)\le Q_\eta(\mathbf X\mid\mathbf Y)\) for all \(\mathbf X\), with \(Q_\eta(\mathbf Y\mid\mathbf Y)=\sigma_\eta(\mathbf Y)\). Hence, \(Q_\eta(\cdot\mid\mathbf Y)\) majorizes \(\sigma_\eta\) at \(\mathbf Y\). Moreover, minimizing \(Q_\eta(\mathbf X\mid\mathbf Y)\) over \(\mathbf X\) is equivalent to minimizing the fixed-weight conditional stress \(\sum_k q_k(\mathbf Y)r_k(\mathbf X)^2\).
\end{theorem}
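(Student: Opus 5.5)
The plan is to reduce the claim to a one-dimensional inequality for the Fair loss and then sum over active pairs. Write \(\rho_\eta(t)=|t|-\eta\ln(1+|t|/\eta)\) and view it as a function of \(s=t^2\): set \(\phi(s)=\sqrt{s}-\eta\ln(1+\sqrt{s}/\eta)\) for \(s\ge0\), so \(\rho_\eta(t)=\phi(t^2)\). The key fact is that \(\phi\) is concave on \([0,\infty)\). Then the tangent-line inequality \(\phi(s)\le\phi(s_0)+\phi'(s_0)(s-s_0)\) gives, with \(s=r_k(\mathbf X)^2\) and \(s_0=r_k(\mathbf Y)^2\),
\[
\rho_\eta\bigl(r_k(\mathbf X)\bigr)\le\rho_\eta\bigl(r_k(\mathbf Y)\bigr)+\phi'\bigl(r_k(\mathbf Y)^2\bigr)\bigl[r_k(\mathbf X)^2-r_k(\mathbf Y)^2\bigr].
\]

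Next I compute \(\phi'\). By the chain rule with \(u=\sqrt{s}\), \(\frac{d}{du}\bigl(u-\eta\ln(1+u/\eta)\bigr)=1-\frac{1}{1+u/\eta}=\frac{u}{u+\eta}\). Since \(du/ds=1/(2u)\), we get \(\phi'(s)=\frac{1}{2(\sqrt{s}+\eta)}\). This formula extends continuously to \(s=0\) with value \(1/(2\eta)\), and this is exactly where the \(\eta\)-stabilization helps compared with the \(\ell_1\) case. Also \(\phi'\) is decreasing in \(s\), so \(\phi\) is concave. I still need to check concavity including the endpoint \(s=0\): \(\phi\) is continuous on \([0,\infty)\) and \(C^1\) on \((0,\infty)\) with nonincreasing derivative, and its one-sided derivative at \(0\) is \(1/(2\eta)\). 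Hence the tangent inequality holds at \(s_0=0\) as well. Plugging in, \(\phi'(r_k(\mathbf Y)^2)=\frac{1}{2(|r_k(\mathbf Y)|+\eta)}\). Multiplying by \(w_k\ge0\) and summing over \(k\) gives \(\sigma_\eta(\mathbf X)\le Q_\eta(\mathbf X\mid\mathbf Y)\) with \(q_k(\mathbf Y)=w_k/(|r_k(\mathbf Y)|+\eta)\), exactly as in the statement.

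The remaining claims are immediate. Setting \(\mathbf X=\mathbf Y\) makes the bracketed differences vanish, so \(Q_\eta(\mathbf Y\mid\mathbf Y)=\sigma_\eta(\mathbf Y)\). In \(Q_\eta(\mathbf X\mid\mathbf Y)\), the terms \(\sigma_\eta(\mathbf Y)\) and \(-\tfrac12\sum_k q_k(\mathbf Y)r_k(\mathbf Y)^2\) do not depend on \(\mathbf X\), and the factor \(\tfrac12\) is a positive scaling. Therefore minimizing \(Q_\eta(\cdot\mid\mathbf Y)\) has the same minimizers as minimizing \(\sum_k q_k(\mathbf Y)r_k(\mathbf X)^2\). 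This is a fixed-weight conditional stress of the form \eqref{eq:f1} with nonnegative weights \(q_k(\mathbf Y)\). These weights are positive on \(\mathcal E\), so the irreducibility of \(\mathbf W\) carries over to the new weights.

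The main obstacle is the careful concavity argument at \(s=0\): one must handle the nondifferentiability of \(\sqrt{s}\) there. I would treat it either by the one-sided derivative argument above, or by verifying the scalar inequality \(\rho_\eta(t)\le\rho_\eta(t_0)+\frac{t^2-t_0^2}{2(|t_0|+\eta)}\) directly via the mean value theorem applied to \(\phi\) on \([\min(s,s_0),\max(s,s_0)]\), using that \(\phi'\) is monotone.
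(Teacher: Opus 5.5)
Your proof is correct and follows the paper's argument essentially step for step. The paper also writes the loss as $g_\eta(s)=\rho_\eta(\sqrt{s})$, computes $g_\eta'(s)=1/[2(\sqrt{s}+\eta)]$ with $g_\eta'(0^+)=1/(2\eta)$, deduces concavity from the nonincreasing derivative, applies the tangent-line inequality at $s=r_k(\mathbf Y)^2$, sums with weights $w_k$, and notes that the remaining terms do not depend on $\mathbf X$; your one-sided-derivative argument at $s_0=0$ spells out an endpoint detail the paper only asserts.
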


\begin{proof}
Let $\rho_\eta(t)=|t|-\eta\ln\left(1+\frac{|t|}{\eta}\right)$ and define $g_\eta(s)=\rho_\eta(\sqrt{s}), s\ge0.$ For \(s>0\), $g_\eta'(s)=1/[2(\sqrt{s}+\eta)],$
with \(g_\eta'(0^+)=1/(2\eta)\). Since \(g_\eta'\) is nonincreasing,
\(g_\eta\) is concave on \([0,\infty)\). Therefore,  $g_\eta(s)\le g_\eta(t)+g_\eta'(t)(s-t),s,t\ge0.$
Taking
\(s=r_k(\mathbf X)^2\) and \(t=r_k(\mathbf Y)^2\) gives
\[
\rho_\eta(r_k(\mathbf X))
\le
\rho_\eta(r_k(\mathbf Y))
+
\frac{
r_k(\mathbf X)^2-r_k(\mathbf Y)^2
}{
2(|r_k(\mathbf Y)|+\eta)
}.
\]
Multiplying by \(w_k\), summing over \(k\), and using the definition of
\(q_k(\mathbf Y)\) yields $\sigma_\eta(\mathbf X) \le Q_\eta(\mathbf X\mid\mathbf Y).$
Equality holds at \(\mathbf X=\mathbf Y\). Finally, all terms in
\(Q_\eta(\mathbf X\mid\mathbf Y)\) other than
\(\frac12\sum_k q_k(\mathbf Y)r_k(\mathbf X)^2\) are independent of
\(\mathbf X\).
\end{proof}

\section{Robust Conditional SMACOF Algorithm}
\label{sec:rcsmacof}
Theorem~\ref{thm:stabilized_majorization} places the proposed rcMDS objective within the IRLS framework: at a fixed support point, the Fair objective $\sigma_\eta$ is majorized by $Q_\eta$, and minimizing $Q_\eta$ is equivalent to minimizing a fixed weight condtional stress. Consequently, each outer iteration reduces to minimizing a fixed-weight conditional stress problem.

\begin{theorem}[Descent and objective value convergence of rcSMACOF]
\label{thm:rcMDS_convergence}
At outer iteration \(v\), define \(q_k^{[v]}=w_k/(|r_k(\mathbf X^{[v]})|+\eta)\). Given that the inner Conditional SMACOF step returns \(\mathbf X^{[v+1]}\) satisfying \(\sum_k q_k^{[v]}r_k(\mathbf X^{[v+1]})^2 \le \sum_k q_k^{[v]}r_k(\mathbf X^{[v]})^2\), then   \(0 \le\sigma_\eta(\mathbf X^{[v+1]})\le \sigma_\eta(\mathbf X^{[v]})\). The sequence \(\{\sigma_\eta(\mathbf X^{[v]})\}\) is monotone nonincreasing and bounded below.
\end{theorem}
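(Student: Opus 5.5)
The plan is the standard majorize–minimize (sandwich) argument built on Theorem~\ref{thm:stabilized_majorization}. Take the support point \(\mathbf Y=\mathbf X^{[v]}\). With this choice the weights \(q_k(\mathbf Y)=w_k/(|r_k(\mathbf X^{[v]})|+\eta)\) are exactly the \(q_k^{[v]}\) of the statement. They are finite and nonnegative because \(\eta>0\), so no case distinction on zero residuals is required.

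The chain of inequalities has three links, in this order.
\begin{enumerate}
\item The majorization inequality of Theorem~\ref{thm:stabilized_majorization}, applied at \(\mathbf X=\mathbf X^{[v+1]}\), gives
\[
\sigma_\eta(\mathbf X^{[v+1]})\le Q_\eta(\mathbf X^{[v+1]}\mid\mathbf X^{[v]})
=\sigma_\eta(\mathbf X^{[v]})+\tfrac12\sum_k q_k^{[v]}\bigl[r_k(\mathbf X^{[v+1]})^2-r_k(\mathbf X^{[v]})^2\bigr].
\]
\item The hypothesis on the inner Conditional SMACOF step states that the bracketed weighted sum is \(\le 0\), so \(Q_\eta(\mathbf X^{[v+1]}\mid\mathbf X^{[v]})\le Q_\eta(\mathbf X^{[v]}\mid\mathbf X^{[v]})\).
\item The touching condition \(Q_\eta(\mathbf X^{[v]}\mid\mathbf X^{[v]})=\sigma_\eta(\mathbf X^{[v]})\) closes the chain and yields \(\sigma_\eta(\mathbf X^{[v+1]})\le\sigma_\eta(\mathbf X^{[v]})\).
\end{enumerate}
The inner step only has to decrease the surrogate, not minimize it exactly. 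That is what makes the result applicable to a finite number of inner SMACOF iterations.

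For the lower bound, I would show \(\rho_\eta(t)\ge0\) for all \(t\). Set \(s=|t|/\eta\ge0\). Then \(\rho_\eta(t)=\eta\,(s-\ln(1+s))\), and \(s-\ln(1+s)\ge0\) follows from \(\ln(1+s)\le s\), or equivalently from the fact that this function vanishes at \(0\) and has derivative \(s/(1+s)\ge0\). Since \(w_k\ge0\), every term of \(\sigma_\eta\) is nonnegative, which gives \(0\le\sigma_\eta(\mathbf X^{[v+1]})\).

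Applying the one-step inequality for every \(v\) shows that \(\{\sigma_\eta(\mathbf X^{[v]})\}\) is monotone nonincreasing and bounded below by \(0\). By the monotone convergence theorem it therefore converges to a finite limit, as stated in the abstract.

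No step is a real obstacle, since the result is essentially bookkeeping once Theorem~\ref{thm:stabilized_majorization} is available. The one point that needs care is checking that the weights in the inner hypothesis coincide with those in the majorizer at \(\mathbf Y=\mathbf X^{[v]}\). If they did not, the second link of the chain would break. The stabilization \(\eta>0\) is what guarantees these weights are well defined, in contrast to the singular \(\ell_1\) weights \(w_k/|r_k|\).
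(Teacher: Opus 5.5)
Your proposal is correct and follows the same majorize--minimize sandwich argument as the paper. You apply the majorization of Theorem~\ref{thm:stabilized_majorization} at the support point $\mathbf X^{[v]}$, use the inner-step decrease hypothesis on the fixed-weight stress, and finish with nonnegativity and monotone convergence. Your version is also more careful than the printed proof, which writes the weights as $q_k^{[v+1]}$, states an equality where only an inequality holds, and asserts $\sigma_\eta\ge 0$ without the justification $s-\ln(1+s)\ge 0$ that you supply.
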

\begin{proof}
Theorem \eqref{thm:stabilized_majorization} immediately gives
\[
\sigma_\eta(\mathbf{X}^{[v+1]})\leq Q_\eta(\mathbf{X}^{[v+1]}|\mathbf{X}^{[v]})=\sigma_\eta(\mathbf{X}^{[v]})+\frac{1}{2}[\sum_kq_k^{[v+1]}r_k(\mathbf{X})^2-\sum_k q_k^{[v]}r_k(\mathbf{X})^2].
\]
Therefore, 
\[
\sigma_\eta(\mathbf X^{[v+1]})-\sigma_\eta(\mathbf X^{[v]})=\frac{1}{2}[\sum_kq_k^{[v+1]}r_k(\mathbf{X})^2-\sum_k q_k^{[v]}r_k(\mathbf{X})^2]\leq 0.
\]
Moreover, $\sigma_\eta(\mathbf X^{[v+1]})\geq 0$, and therefore $\sigma_\eta(X^{[v]})\rightarrow\bar{\sigma}_\eta$ for some finite $\bar{\sigma}_\eta\geq0$. 
\end{proof}

Building on the Conditional SMACOF procedure of \citet{10371783}, we solve this fixed-weight subproblem by applying Conditional SMACOF as an inner update. At outer iteration \(v\), the robust weights \(q_k^{[v]}\) are computed from the current residuals and then held fixed. The resulting subproblem has the same form as the conditional stress minimized in conditional SMACOF, with \(q_k^{[v]}\) replacing the original pairwise weights. We refer to the resulting outer--inner procedure as Robust Conditional SMACOF (rcSMACOF).

Recall that \(\mathbf V\in\mathbb R^{n\times q}\) denote the feature matrix whose \(i\)-th row is \(\mathbf v_i^\top\). Define the matrix \(\mathbf H^{[v]}\) by
\begin{equation}
h_{k}^{[v]}
=
\begin{cases}
-q_{k}^{[v]}, & i\neq j,\\ 
\displaystyle\sum_{l\neq i}q_{il}^{[v]}, & i=j.
\end{cases}
\label{eq:H_laplacian}
\end{equation}

Let $\mathbf U^{[v,0]}=\mathbf U^{[v]},\mathbf B^{[v,0]}=\mathbf B^{[v]}.
$
At inner iteration \(w\), define \(\mathbf C^{[v,w]}\) by
\begin{equation}
c_{k}^{[v,w]}
=
\begin{cases}
-\dfrac{
q_{k}^{[v]}\delta_{k}
}{
d_{k}(\mathbf U^{[v,w]},\mathbf B^{[v,w]})
}
&\ 
d_{k}(\mathbf U^{[v,w]},\mathbf B^{[v,w]})\neq0,
\\
0
&\ 
d_{k}(\mathbf U^{[v,w]},\mathbf B^{[v,w]})=0
\end{cases}
\label{eq:C_offdiag}
\end{equation}
for $i\neq j$, and
\begin{equation}
c_{ii}^{[v,w]}
=
-\sum_{\substack{j=1,j\neq i}}^n c_{k}^{[v,w]}.
\label{eq:C_diag}
\end{equation}

The Robust Conditional SMACOF updates are
\begin{align}
\mathbf U^{[v,w+1]}
&=
\left(\mathbf H^{[v]}\right)^+
\mathbf C^{[v,w]}
\mathbf U^{[v,w]},
\label{eq:U_update}
\\
\mathbf B^{[v,w+1]}
&=
\left(\mathbf V^\top\mathbf H^{[v]}\mathbf V\right)^+
\mathbf V^\top
\mathbf C^{[v,w]}
\mathbf V
\mathbf B^{[v,w]}.
\label{eq:B_update}
\end{align}
Here \((\cdot)^+\) denotes the Moore Penrose inverse. 

\begin{proposition}[Positive semidefinite structure under robust reweighting]
\label{prop:reweighting_regularity}
Suppose the original active graph is connected and the active covariate
differences span \(\mathbb R^q\). For every outer iteration \(v\), $0<q_k^{[v]}\le \frac{w_k}{\eta}).$
Hence \(\mathbf H^{[v]}\) is the Laplacian of the same connected graph,
$\ker(\mathbf H^{[v]})=\operatorname{span}\{\mathbf1\}, \operatorname{rank}(\mathbf H^{[v]})=n-1,$
and $\mathbf V^\top\mathbf H^{[v]}\mathbf V\succ0.$  
\end{proposition}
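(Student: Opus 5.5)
The plan is to first bound the robust weights, then read off the graph-Laplacian facts, and finally obtain positive definiteness of \(\mathbf V^\top\mathbf H^{[v]}\mathbf V\) from the kernel computation and the spanning assumption on covariate differences.

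\textbf{Weights.} For every active pair \(k\in\mathcal E\) we have \(w_k>0\). Since \(|r_k(\mathbf X^{[v]})|\ge 0\) and \(\eta>0\), the denominator satisfies \(|r_k(\mathbf X^{[v]})|+\eta\ge\eta>0\). It is also finite, because \(\delta_k\) and \(d_k\) are finite. Hence
\[
0<q_k^{[v]}=\frac{w_k}{|r_k(\mathbf X^{[v]})|+\eta}\le\frac{w_k}{\eta}.
\]
For inactive pairs, \(w_{ij}=0\) gives \(q_{ij}^{[v]}=0\). So \(q^{[v]}\) is positive on exactly the edge set \(\mathcal E\) and zero elsewhere. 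The reweighted graph therefore has the same edges as the original active graph and is connected. The key point is that the \(+\eta\) stabilization prevents any weight from vanishing or blowing up, unlike the \(\ell_1\) weights \(w_k/|r_k|\).

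\textbf{Laplacian structure.} By \eqref{eq:H_laplacian}, \(\mathbf H^{[v]}\) is the weighted graph Laplacian with weights \(q^{[v]}\). I would use the standard quadratic-form identity
\[
\mathbf z^\top\mathbf H^{[v]}\mathbf z=\sum_{(i,j)\in\mathcal E}q_{ij}^{[v]}(z_i-z_j)^2\ge0,
\]
which gives positive semidefiniteness. If \(\mathbf z^\top\mathbf H^{[v]}\mathbf z=0\), every term vanishes, so \(z_i=z_j\) along every edge. Connectivity then propagates this equality to all vertices, so \(\mathbf z\in\operatorname{span}\{\mathbf 1\}\). For a PSD matrix, \(\mathbf z^\top\mathbf H\mathbf z=0\) is equivalent to \(\mathbf H\mathbf z=0\). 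Hence \(\ker\mathbf H^{[v]}=\operatorname{span}\{\mathbf 1\}\) and \(\operatorname{rank}=n-1\).

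\textbf{Covariate Gram matrix.} For \(\mathbf a\in\mathbb R^q\), set \(\mathbf z=\mathbf V\mathbf a\), so \(z_i-z_j=(\mathbf v_i-\mathbf v_j)^\top\mathbf a\). Then
\[
\mathbf a^\top\mathbf V^\top\mathbf H^{[v]}\mathbf V\mathbf a=\sum_{k\in\mathcal E}q_k^{[v]}\bigl((\mathbf v_i-\mathbf v_j)^\top\mathbf a\bigr)^2\ge0.
\]
Suppose this quantity is zero. Since every \(q_k^{[v]}>0\), we get \((\mathbf v_i-\mathbf v_j)^\top\mathbf a=0\) for all active pairs. These active differences span \(\mathbb R^q\), so \(\mathbf a=0\). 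Therefore \(\mathbf V^\top\mathbf H^{[v]}\mathbf V\succ0\).

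\textbf{Main obstacle.} The only delicate step is the first: strict positivity and finiteness of every reweighted edge weight. If this failed, the graph could disconnect and both the kernel claim and positive definiteness would break. Once the \(\eta\)-bound is in place, everything else is routine Laplacian algebra.
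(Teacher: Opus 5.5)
Your proposal is correct and follows the same route as the paper's proof. You get positivity and the bound $q_k^{[v]}\le w_k/\eta$ from $\eta>0$, the kernel and rank from connectivity via the Laplacian, and positive definiteness from $\sum_{(i,j)\in\mathcal E}q_{ij}^{[v]}[\mathbf a^\top(\mathbf v_i-\mathbf v_j)]^2>0$ together with the spanning assumption. The only difference is that you write out the Laplacian kernel argument (a zero quadratic form forces equality along edges, and for a PSD matrix $\mathbf z^\top\mathbf H\mathbf z=0$ implies $\mathbf H\mathbf z=0$), which the paper simply cites as standard.
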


\begin{proof}
The support statement follows immediately from \(\eta>0\). Connectivity then
gives the standard nullspace and rank properties of a weighted graph
Laplacian. For any \(\mathbf a\ne\mathbf0\),
\[
 \mathbf a^\top\mathbf V^\top\mathbf H^{[v]}\mathbf V\mathbf a
 =\sum_{(i,j)\in\mathcal E}q_{ij}^{[v]}
 \bigl[\mathbf a^\top(\mathbf v_i-\mathbf v_j)\bigr]^2>0,
\]
because the active differences span \(\mathbb R^q\). In particular, the ordinary inverse is valid in the \(\mathbf B\)-update.
\end{proof}

After the inner loop converges, set $\mathbf U^{[v+1]}=\mathbf U^{[v,w_{\mathrm{final}}]}, \mathbf B^{[v+1]}=\mathbf B^{[v,w_{\mathrm{final}}]}.$ 
The outer weights are then recomputed from the new residuals.

\begin{algorithm}[htpb!]
\caption{rcSMACOF}
\label{alg:orr_csmacof}
\begin{algorithmic}[1]
\REQUIRE Dissimilarities \(\{\delta_k\}\), weights \(\{w_k\}\), covariates \(\mathbf V\), initial values \(\mathbf U^{[0]}\), \(\mathbf B^{[0]}\), parameter \(\eta>0\), tolerance \(\epsilon_{\mathrm{outer}}\), maximum iterations \(v_{\max}\)
\ENSURE Final estimates \(\widehat{\mathbf U}\), \(\widehat{\mathbf B}\)

\STATE Set \(v=0\) and \(\mathbf X^{[0]}=(\mathbf U^{[0]},\mathbf B^{[0]})\)

\WHILE{\(v<v_{\max}\)}
    \STATE Compute residuals
    \[
    r_k^{[v]}=\delta_k-d_k(\mathbf U^{[v]},\mathbf B^{[v]}).
    \]

    \STATE Update robust weights
    \[
    q_k^{[v]}=\frac{w_k}{|r_k^{[v]}|+\eta}.
    \]

    \STATE With \(q_k^{[v]}\) fixed, find $(\mathbf U^{[v+1]},\mathbf B^{[v+1]})$
    using Conditional SMACOF.

    \STATE Set \(\mathbf X^{[v+1]}=(\mathbf U^{[v+1]},\mathbf B^{[v+1]})\)

    \IF{\(\left|\bar{\sigma}_\eta(\mathbf X^{[v+1]})-\bar{\sigma}_\eta(\mathbf X^{[v]})\right|<\epsilon_{\mathrm{outer}}\)}
        \STATE \textbf{break}
    \ENDIF

    \STATE \(v\leftarrow v+1\)
\ENDWHILE

\STATE Set \(\widehat{\mathbf U}=\mathbf U^{[v+1]}\), \(\widehat{\mathbf B}=\mathbf B^{[v+1]}\)

\RETURN \(\widehat{\mathbf U}\), \(\widehat{\mathbf B}\)

\end{algorithmic}
\end{algorithm}

By Theorem~\ref{thm:rcMDS_convergence}, if the inner Conditional SMACOF step decreases the fixed-weight stress, then the outer iteration decreases the stabilized conditional strife \(\sigma_\eta\).

For monitoring the outer robust objective, we use the normalized stabilized strife
\begin{equation}
\hat{\sigma}_\eta(\mathbf U,\mathbf B)
=
\frac{
\sum_{k}
w_{k}
\phi_\eta\!\left(
\delta_{k}-d_{k}(\mathbf U,\mathbf B)
\right)
}{
\sum_{k}
w_{k}
\phi_\eta(\delta_{k})
},
\label{eq:normalized_stabilized_strife}
\end{equation}
provided that the denominator is nonzero. For monitoring the inner fixed-weight SMACOF loop at outer iteration \(v\), we use
\begin{equation}
\hat S^{[v]}(\mathbf U,\mathbf B)
=
\frac{
\sum_{k}
q_{k}^{[v]}
\left(
\delta_{k}-d_{k}(\mathbf U,\mathbf B)
\right)^2
}{
\sum_{k}
q_{k}^{[v]}
\delta_{k}^2
}.
\label{eq:normalized_inner_stress}
\end{equation}

\section{Computation Experiment}
\label{sec:experiment}
Unless otherwise stated, both cSMACOF and rcSMACOF are run for a maximum of 10,000 iterations and are declared converged when the absolute difference between two consecutive stress values is less than \(10^{-15}\). For rcSMACOF, we perform one conditional SMACOF update between consecutive robust weight updates. The stabilization parameter is fixed at \(\eta=0.01\) in all experiments.  

Agreement between the recovered and true configurations is evaluated across ten resampling runs, with each run generated from 10 random contaminated
dissimilarity matrices. Outliers were injected by replacing a random subset of dissimilarities with profiles drawn from a Uniform distribution $\mathcal{U}(2, 3)\times d_{max}$, where $d_{max}$ is the largest dissimilarity. The percentage of corrupted dissimilarity values ranges from 0\% to 20\%. We primarily report the mean Procrustes statistic (ss)~\citep{peres2001well}, together with the mean average canonical correlation (cc). For the Procrustes comparison, the fitted configuration is optimally rotated, reflected, translated, and scaled to match the true configuration, using the Procrustes symmetric option~\citep{vegan}. The resulting Procrustes statistic ranges from 0 to 1, where 0 indicates perfect recovery and 1 indicates a complete mismatch.

\subsection{Synthetic Car-Brand Perception}
\noindent To evaluate our method under controlled conditions, we generated a synthetic brand-perception dataset with \(N=30\) brands and seven attributes \{\textit{Quality}, \textit{Safety}, \textit{Value}, \textit{Performance}, \textit{Eco}, \textit{Design}, \textit{Tech}\}~\citep{10371783}. Attribute importance were encoded via a normalized weight vector \(\mathbf{w}=(90,88,83,82,81,70,68)/562\). For each brand \(i\) and attribute \(\ell\), baseline values were sampled as $x_{i\ell}\sim\mathcal{U}(0,1)$ and perturbed with multiplicative Gaussian noise to obtain observations \(\hat{x}_{i\ell}=x_{i\ell}\{1+0.05\,\varepsilon_{i\ell}\}\), where \(\varepsilon_{i\ell}\sim\mathcal{N}(0,1)\). Weighted profiles were formed as \(\tilde{\mathbf{x}}_{i}=\mathbf{x}_{i}\odot\mathbf{w}\), and pairwise brand dissimilarities were computed using Euclidean distance \(d_{k}\). To reflect measurement uncertainty, we further consider noisy dissimilarities \(\hat{d}_{k}=d_{k}\{1+0.05\,\epsilon_{k}\}\) with \(\epsilon_{k}\sim\mathcal{N}(0,1)\). We consider three known-feature scenarios: Case 1 uses \{Quality, Safety, Value, Performance\}; Case 2 uses \{Quality, Safety, Value, Performance, Eco\}; and Case 3 uses \{Quality, Safety, Value, Performance, Eco, Design\}.

\begin{figure}[htbp]
\centering 
\begin{tikzpicture} 
\begin{axis}[
    width=0.7\PlotBaseWidth,
    height=0.6\PlotBaseWidth,
    scale only axis,
    axis lines=box,
    title={Case 1},
     title style={
        at={(axis description cs:0.5,0.97)},
        anchor=north
    },
    xmin=0, xmax=10,
    ymin=-0.1, ymax=1.1, 
    ylabel={cc}, 
    xticklabels={},
    ytick={0.2,0.4,0.6,0.8,1},    
]
\addplot[  
    mark=x, 
    error bars/y dir=both,
    error bars/y explicit, 
] table [
    x=contamination,
    y=cmds_corr_avg,
    col sep=comma,
    y error=cmds_corr_err
]{csv/car_final1.csv}; 
\addplot[ 
    mark=diamond*,  
    error bars/y dir=both,
    error bars/y explicit,
    error bars/error mark options={draw=gray}, 
    error bars/error bar style={gray},
] table [
    x=contamination,
    y=lcmds_corr_avg,
    col sep=comma,
    y error=lcmds_corr_err
]{csv/car_final1.csv}; 
\end{axis} 
\begin{axis}[
    width=0.7\PlotBaseWidth,
    height=0.6\PlotBaseWidth,
    scale only axis,
    xmin=0, xmax=10,
    ymin=-0.1, ymax=1.1,
    axis y line*=right,
    axis x line=none,
    ylabel={ss}, 
    ylabel near ticks 
]
\addplot[  
    mark=square, 
    forget plot,
    error bars/y dir=both,
    error bars/y explicit, 
] table [
    x=contamination,
    y=cmds_ss_avg,
    col sep=comma,
    y error=cmds_ss_err
]{csv/car_final1.csv};
\addplot[  
    mark=*,
    forget plot,
    error bars/y dir=both,
    error bars/y explicit,
    error bars/error mark options={draw=gray}, 
    error bars/error bar style={gray},
] table [
    x=contamination,
    y=lcmds_ss_avg,
    col sep=comma,
    y error=lcmds_ss_err
]{csv/car_final1.csv};
\end{axis}
\end{tikzpicture}
\vspace{1pt}
\begin{tikzpicture} 
\begin{axis}[
    width=0.7\PlotBaseWidth,
    height=0.6\PlotBaseWidth,
    scale only axis,
    xticklabels={},
    title={Case 2},
     title style={
        at={(axis description cs:0.5,0.97)},
        anchor=north
    },
    axis lines=box,
    xmin=0, xmax=10,
    ymin=-0.1, ymax=1.1, 
    ylabel={cc}, 
    ytick={0.2,0.4,0.6,0.8,1}, 
    legend style={at={(0.7,.7)},
        anchor=north,
        legend columns=1,
        font=\scriptsize, 
        draw=none,
         row sep=-2pt,
        /tikz/every even column/.append style={column sep=2pt}
    }    
]
\addplot[  
    mark=x, 
    error bars/y dir=both,
    error bars/y explicit, 
] table [
    x=contamination,
    y=cmds_corr_avg,
    col sep=comma,
    y error=cmds_corr_err
]{csv/car_final2.csv};
\addlegendentry{cSMACOF cc}
\addplot[ 
    mark=diamond*,  
    error bars/y dir=both,
    error bars/y explicit,
    error bars/error mark options={draw=gray}, 
    error bars/error bar style={gray},
] table [
    x=contamination,
    y=lcmds_corr_avg,
    col sep=comma,
    y error=lcmds_corr_err
]{csv/car_final2.csv};
\addlegendentry{rcSMACOF cc}   
\addlegendimage{mark=square}
\addlegendentry{cSMACOF ss}
\addlegendimage{mark=*}
\addlegendentry{rcSMACOF ss}
\end{axis} 
\begin{axis}[
    width=0.7\PlotBaseWidth,
    height=0.6\PlotBaseWidth,
    scale only axis,
    xmin=0, xmax=10,
    ymin=-0.1, ymax=1.1,
    axis y line*=right,
    axis x line=none,
    ylabel={ss}, 
    ylabel near ticks 
]
\addplot[  
    mark=square, 
    forget plot,
    error bars/y dir=both,
    error bars/y explicit, 
] table [
    x=contamination,
    y=cmds_ss_avg,
    col sep=comma,
    y error=cmds_ss_err
]{csv/car_final2.csv};
\addplot[  
    mark=*,
    forget plot,
    error bars/y dir=both,
    error bars/y explicit,
    error bars/error mark options={draw=gray}, 
    error bars/error bar style={gray},
] table [
    x=contamination,
    y=lcmds_ss_avg,
    col sep=comma,
    y error=lcmds_ss_err
]{csv/car_final2.csv};
\end{axis} 
\end{tikzpicture}
\vspace{1pt}
\begin{tikzpicture} 
\begin{axis}[
    width=0.7\PlotBaseWidth,
    height=0.6\PlotBaseWidth,
    scale only axis,
    axis lines=box,
    title={Case 3},
    title style={
        at={(axis description cs:0.5,0.97)},
        anchor=north
    },
    xmin=0, xmax=10,
    ymin=-0.1, ymax=1.1,
    xlabel={Contamination level (\%)},
    ylabel={cc}, 
    ytick={0.2,0.4,0.6,0.8,1},  
]
\addplot[  
    mark=x, 
    error bars/y dir=both,
    error bars/y explicit, 
] table [
    x=contamination,
    y=cmds_corr_avg,
    col sep=comma,
    y error=cmds_corr_err
]{csv/car_final3.csv}; 
\addplot[ 
    mark=diamond*,  
    error bars/y dir=both,
    error bars/y explicit,
    error bars/error mark options={draw=gray}, 
    error bars/error bar style={gray},
] table [
    x=contamination,
    y=lcmds_corr_avg,
    col sep=comma,
    y error=lcmds_corr_err
]{csv/car_final3.csv}; 
\end{axis} 
\begin{axis}[
    width=0.7\PlotBaseWidth,
    height=0.6\PlotBaseWidth,
    scale only axis,
    xmin=0, xmax=10,
    ymin=-0.1, ymax=1.1,
    axis y line*=right,
    axis x line=none,
    ylabel={ss}, 
    ylabel near ticks 
]
\addplot[  
    mark=square, 
    forget plot,
    error bars/y dir=both,
    error bars/y explicit, 
] table [
    x=contamination,
    y=cmds_ss_avg,
    col sep=comma,
    y error=cmds_ss_err
]{csv/car_final3.csv};
\addplot[  
    mark=*,
    forget plot,
    error bars/y dir=both,
    error bars/y explicit,
    error bars/error mark options={draw=gray}, 
    error bars/error bar style={gray},
] table [
    x=contamination,
    y=lcmds_ss_avg,
    col sep=comma,
    y error=lcmds_ss_err
]{csv/car_final3.csv};
\end{axis}
\end{tikzpicture}
\caption{Canonical correlations (cc) and Procrustes statistic (ss) between the estimated and true configurations under three different scenarios.}
\label{car}
\end{figure}
Fig.~\ref{car} shows a clear robustness advantage of rcSMACOF as the contamination level increases. For cSMACOF, the average canonical coefficient drops sharply as the contamination level increases. This indicates that the recovered configuration rapidly loses agreement with the true feature space in the presence of outlying dissimilarity values. Likewise, the Procrustes statistic increases with the contamination level, which also demonstrates the adverse effect of outliers to the solution quality. In contrast, rcSMACOF remains stable across all contamination levels: its canonical coefficient stays close to one for most contamination levels and remains above the cSMACOF curve throughout, while its Procrustes error stays close to zero with only a mild increase at higher contamination. The error bars are also consistently smaller for rcSMACOF across replications. Overall, these results suggest that the rcSMACOF effectively suppresses corrupted dissimilarities and preserves the target low-dimensional structure with high stability. rcSMACOF converged within 3,000 iterations in all cases. Detailed convergence iteration counts are provided in the supplementary material.

\subsection{Kinship Terms} 
This section demonstrates the robustness of rcSMACOF in simplifying visualization and knowledge discovery tasks. We introduced synthetic outliers into the kinship distance matrix. For each selected pair, the corresponding dissimilarity was replaced with an artificially inflated value to mimic a severe measurement anomaly. The experiments were conducted at 2\% outlier levels.  
   \begin{figure}[htbp]
\centering
\frame{%
\begin{minipage}[c][0.45\linewidth][c]{0.45\linewidth}
\centering
\begin{tikzpicture}
\begin{axis}[
  xlabel={},
  ylabel={},
  width=0.90\linewidth,
  height=0.90\linewidth,
  xtick=\empty,
  ytick=\empty,
  scale only axis,
  axis lines=none,
]
\addplot[
  only marks,
  mark=*,
  mark size=.5pt,
  point meta=explicit symbolic,
  nodes near coords,
  every node near coord/.append style={
    font=\footnotesize,
    /utils/exec={%
      \pgfkeysalso{/tikz/anchor=east}%
      \ifnum\coordindex=10\relax
        \pgfkeysalso{/tikz/anchor=south west}
      \fi
      \ifnum\coordindex=9\relax
        \pgfkeysalso{/tikz/anchor=west}
      \fi
      \ifnum\coordindex=6\relax
        \pgfkeysalso{/tikz/anchor=west}
      \fi
      \ifnum\coordindex=7\relax
        \pgfkeysalso{/tikz/anchor=north east}
      \fi
      \ifnum\coordindex=0\relax
        \pgfkeysalso{/tikz/anchor=south}
      \fi
      \ifnum\coordindex=0\relax
        \pgfkeysalso{/tikz/anchor=south west}
      \fi
    }
  },
]
table[
  x=o_cmds_x,
  y=o_cmds_y,
  meta=name,
  col sep=comma
]
{csv/kinship.csv};
\end{axis}
\node[anchor=south west] at (current axis.south west) {(a)};
\end{tikzpicture}%
\end{minipage}%
}
\hspace{2pt}
\frame{%
\begin{minipage}[c][0.45\linewidth][c]{0.45\linewidth}
\centering
\begin{tikzpicture}
\begin{axis}[
  xlabel={},
  ylabel={},
  width=0.90\linewidth,
  height=0.90\linewidth,
  xtick=\empty,
  ytick=\empty,
  scale only axis,
  axis lines=none,
]
\addplot[
  only marks,
  mark=*,
  mark size=.5pt,
  point meta=explicit symbolic,
  nodes near coords,
  every node near coord/.append style={
    font=\footnotesize,
    /utils/exec={%
      \pgfkeysalso{/tikz/anchor=east}%
      \ifnum\coordindex=11\relax
        \pgfkeysalso{/tikz/anchor=west}
      \fi
      \ifnum\coordindex=12\relax
        \pgfkeysalso{/tikz/anchor=south}
      \fi
      \ifnum\coordindex=1\relax
        \pgfkeysalso{/tikz/anchor=west}
      \fi
      \ifnum\coordindex=10\relax
        \pgfkeysalso{/tikz/anchor=west}
      \fi
      \ifnum\coordindex=9\relax
        \pgfkeysalso{/tikz/anchor=west}
      \fi
      \ifnum\coordindex=8\relax
        \pgfkeysalso{/tikz/anchor=north}
      \fi
      \ifnum\coordindex=3\relax
        \pgfkeysalso{/tikz/anchor=south}
      \fi
      \ifnum\coordindex=7\relax
        \pgfkeysalso{/tikz/anchor=west}
      \fi
      \ifnum\coordindex=6\relax
        \pgfkeysalso{/tikz/anchor=west}
      \fi
      \ifnum\coordindex=9\relax
        \pgfkeysalso{/tikz/anchor=north west}
      \fi
      \ifnum\coordindex=10\relax
        \pgfkeysalso{/tikz/anchor=south west}
      \fi
    }
  },
]
table[
  x=o_lcmds_x,
  y=o_lcmds_y,
  meta=name,
  col sep=comma
]
{csv/kinship.csv};

\draw[dotted,->] (axis cs:30,1) -- (axis cs:-30,1);
\end{axis}
\node[anchor=south west] at (current axis.south west) {(b)};
\end{tikzpicture}%
\end{minipage}%
}
\par\vspace{2pt}
\frame{%
\begin{minipage}[c][0.45\linewidth][c]{0.45\linewidth}
\centering
\begin{tikzpicture}
\begin{axis}[
  xlabel={},
  ylabel={},
  width=0.90\linewidth,
  height=0.90\linewidth,
  xtick=\empty,
  ytick=\empty,
  scale only axis,
  axis lines=none,
]
\addplot[
  only marks,
  mark=*,
  mark size=.5pt,
  point meta=explicit symbolic,
  nodes near coords,
  every node near coord/.append style={
    font=\footnotesize,
    /utils/exec={%
      \pgfkeysalso{/tikz/anchor=east}%
      \ifnum\coordindex=2\relax
        \pgfkeysalso{/tikz/anchor=west}
      \fi
      \ifnum\coordindex=8\relax
        \pgfkeysalso{/tikz/anchor=south}
      \fi
      \ifnum\coordindex=7\relax
        \pgfkeysalso{/tikz/anchor=west}
      \fi
      \ifnum\coordindex=10\relax
        \pgfkeysalso{/tikz/anchor=west}
      \fi
      \ifnum\coordindex=13\relax
        \pgfkeysalso{/tikz/anchor=south}
      \fi
      \ifnum\coordindex=9\relax
        \pgfkeysalso{/tikz/anchor=west}
      \fi
      \ifnum\coordindex=5\relax
        \pgfkeysalso{/tikz/anchor=west}
      \fi
      \ifnum\coordindex=1\relax
        \pgfkeysalso{/tikz/anchor=west}
      \fi
      \ifnum\coordindex=2\relax
        \pgfkeysalso{/tikz/anchor=south west}
      \fi
      \ifnum\coordindex=3\relax
        \pgfkeysalso{/tikz/anchor=north east}
      \fi
      \ifnum\coordindex=13\relax
        \pgfkeysalso{/tikz/anchor=north,/tikz/yshift=-8pt}
      \fi
    }
  },
]
table[
  x=o_cmds_g_x,
  y=o_cmds_g_y,
  meta=name,
  col sep=comma
]
{csv/kinship.csv};
\end{axis}
\node[anchor=south west] at (current axis.south west) {(c)};
\end{tikzpicture}%
\end{minipage}%
}
\hspace{2pt}
\frame{%
\begin{minipage}[c][0.45\linewidth][c]{0.45\linewidth}
\centering
\begin{tikzpicture}
\begin{axis}[
  xlabel={},
  ylabel={},
  width=0.90\linewidth,
  height=0.90\linewidth,
  xtick=\empty,
  ytick=\empty,
  scale only axis,
  axis lines=none,
]
\addplot[
  only marks,
  mark=*,
  mark size=.5pt,
  point meta=explicit symbolic,
  nodes near coords,
  every node near coord/.append style={
    font=\footnotesize,
    /utils/exec={%
      \pgfkeysalso{/tikz/anchor=east}%
      \ifnum\coordindex=4\relax
        \pgfkeysalso{/tikz/anchor=west}
      \fi
      \ifnum\coordindex=7\relax
        \pgfkeysalso{/tikz/anchor=south}
      \fi
      \ifnum\coordindex=5\relax
        \pgfkeysalso{/tikz/anchor=west}
      \fi
      \ifnum\coordindex=1\relax
        \pgfkeysalso{/tikz/anchor=west}
      \fi
      \ifnum\coordindex=3\relax
        \pgfkeysalso{/tikz/anchor=south}
      \fi
      \ifnum\coordindex=8\relax
        \pgfkeysalso{/tikz/anchor=south east}
      \fi
      \ifnum\coordindex=13\relax
        \pgfkeysalso{/tikz/anchor=north west}
      \fi
      \ifnum\coordindex=0\relax
        \pgfkeysalso{/tikz/anchor=north}
      \fi
      \ifnum\coordindex=12\relax
        \pgfkeysalso{/tikz/anchor=west}
      \fi
      \ifnum\coordindex=2\relax
        \pgfkeysalso{/tikz/anchor=north west}
      \fi
      \ifnum\coordindex=10\relax
        \pgfkeysalso{/tikz/anchor=north}
      \fi
      \ifnum\coordindex=0\relax
        \pgfkeysalso{/tikz/anchor=north east}
      \fi
      \ifnum\coordindex=3\relax
        \pgfkeysalso{/tikz/anchor=south,/tikz/yshift=5pt}
      \fi
      \ifnum\coordindex=8\relax
        \pgfkeysalso{/tikz/anchor=south east}
      \fi
      \ifnum\coordindex=10\relax
        \pgfkeysalso{/tikz/anchor=north east}
      \fi
      \ifnum\coordindex=12\relax
        \pgfkeysalso{/tikz/anchor=south west}
      \fi
    }
  },
]
table[
  x=o_lcmds_g_x,
  y=o_lcmds_g_y,
  meta=name,
  col sep=comma
]
{csv/kinship.csv};

\dashbigarc{-4}{2}{13}{85}{-225}
\end{axis}
\node[anchor=south west] at (current axis.south west) {(d)};
\end{tikzpicture}%
\end{minipage}%
}

\caption{Scatterplots of the recovered two-dimensional configurations for the
14 kinship terms. The first row shows the results from cSMACOF, and the second
row shows the results from rcSMACOF. The first column presents the
configurations obtained after conditioning the gender feature, while the
second column presents the configurations obtained after conditioning both
the gender and kinship-degree features. The dotted arrows indicate the
dominant direction of the remaining kinship structure in the recovered
configurations.}
\label{fig:kinship}
\end{figure}
Figure \ref{fig:kinship} shows the configurations obtained by cSMACOF and rcSMACOF.  Under contaminated dissimilarities, the proposed rcSMACOF gives a more interpretable recovery of the residual kinship structure after conditioning. After conditioning the dominant gender feature, cSMACOF still shows a broad kinship-degree direction from the right side of the plot toward the left, but several paired terms are poorly preserved: for example, Daughter and Son are separated noticeably, and Niece and Nephew are also placed far apart. This suggests that the contaminated distances distort the local pairing structure in the cSMACOF configuration. By contrast, rcSMACOF more clearly reveals the hidden kinship-degree organization after gender is removed, as indicated by the dotted directional path. After conditioning both gender and kinship degree, the contrast becomes stronger. The cSMACOF configuration becomes relatively disordered, with terms appearing to jump across groups rather than preserving coherent relational neighborhoods. In the rcSMACOF configuration, however, related terms remain more consistently grouped, and the arrangement follows the remaining generation pattern indicated by the circular dotted path. These results suggest that rcSMACOF is robust to contaminated distance entries and is better able to recover the underlying residual kinship structure after removing the specified dominant features.

\subsection{Rectangles}
We used the rectangle perception data from \citet{borg1983dimensional}. The data consist of two representations of the same 16 rectangles. The first representation uses the original width and height coordinate system. The second representation uses a size and shape coordinate system, where size is defined as the sum of width and height, and shape is defined as width minus height. Each point in these grids corresponds to one rectangle, and the relative positions of the rectangles are displayed in panels (a) and (d), respectively. Outliers are introduced to the dissimilarity data in the same manner as in the previous example. For the first representation, width was used as the known feature, and height was learned as the new coordinate. For the second representation, size was used as the known feature, and shape was learned as the new coordinate.

\begin{figure}[htbp]
\centering
\frame{
\begin{tikzpicture}
\node[inner sep=0pt] (img) {\includegraphics[width=0.4\linewidth]{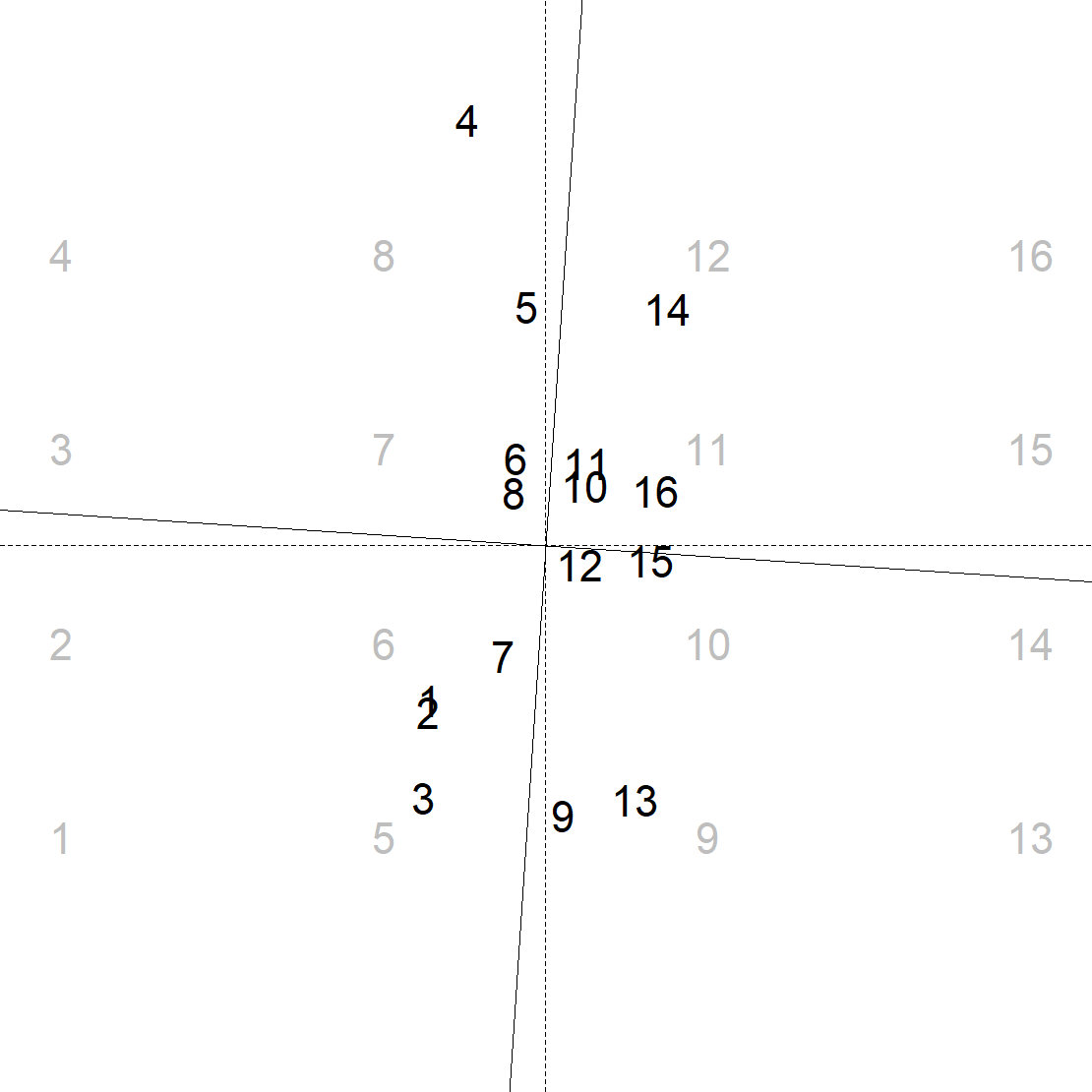}};
\node[anchor=south west] at (img.south west) {(a)};
\end{tikzpicture}
}
\vspace{2pt}
\frame{
\begin{tikzpicture}
\node[inner sep=0pt] (img) {\includegraphics[width=0.4\linewidth]{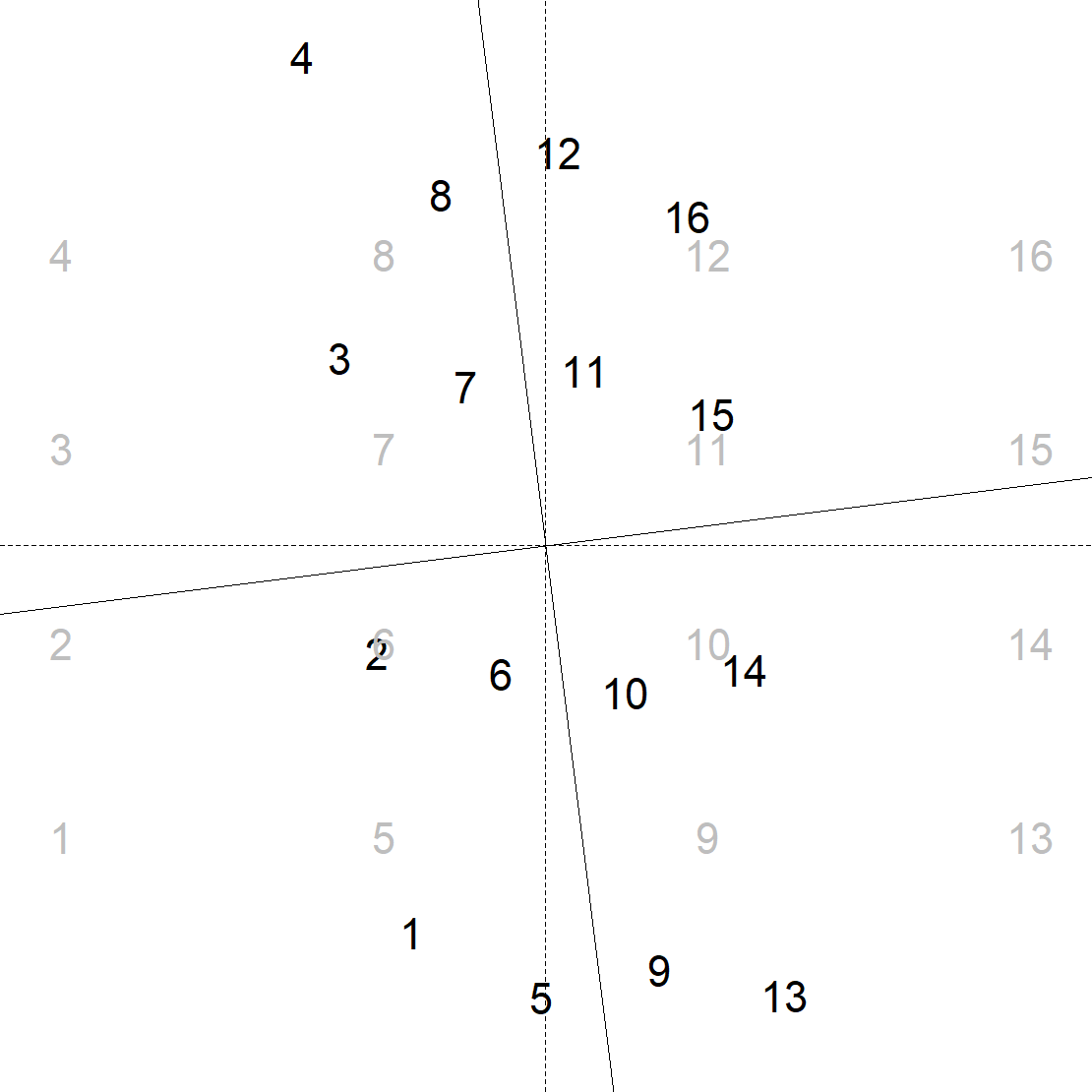}};
\node[anchor=south west] at (img.south west) {(b)};
\end{tikzpicture}
}
\vspace{2pt}
\frame{
\begin{tikzpicture}
\node[inner sep=0pt] (img) {\includegraphics[width=0.4\linewidth]{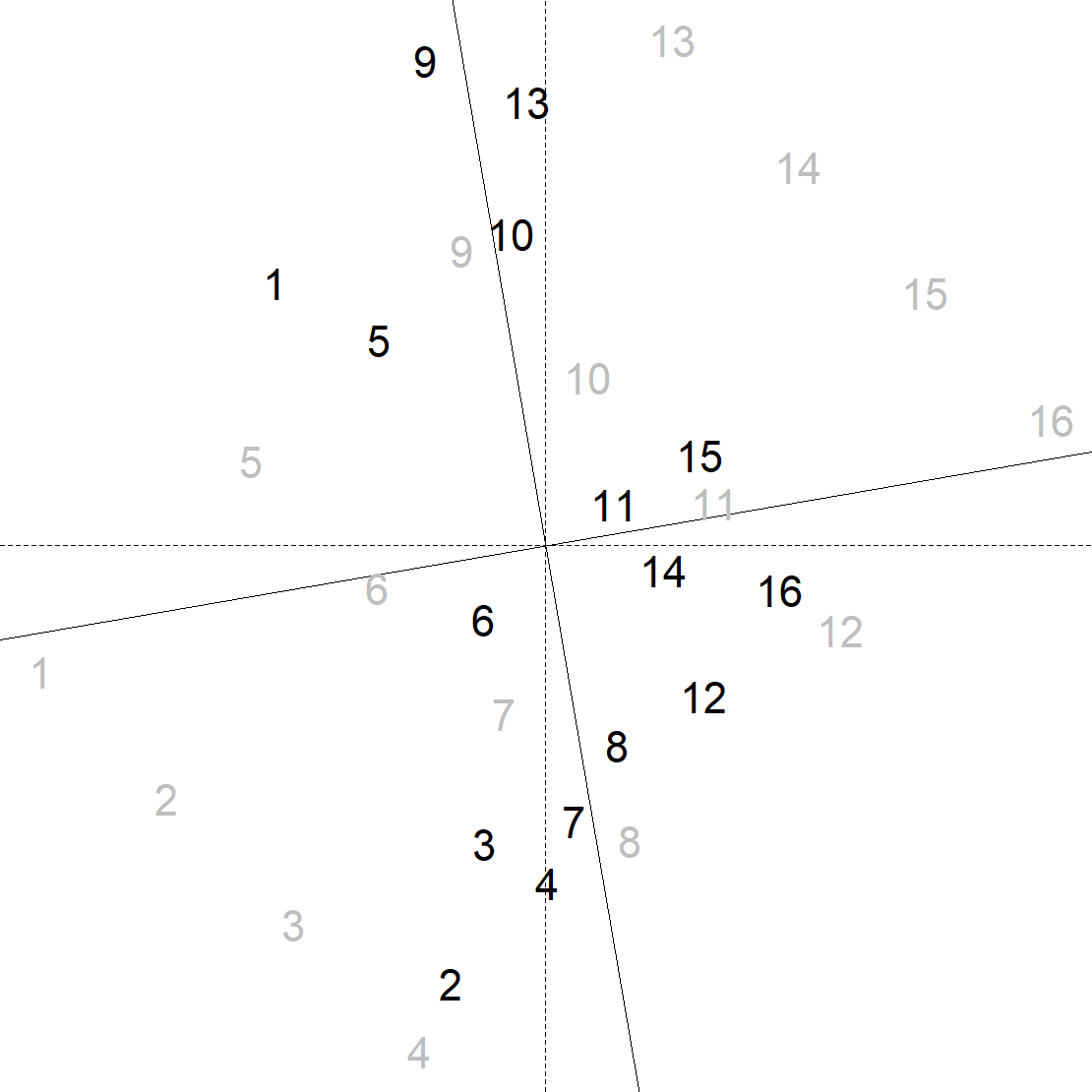}};
\node[anchor=south west] at (img.south west) {(c)};
\end{tikzpicture}
}
\vspace{2pt}
\frame{
\begin{tikzpicture}
\node[inner sep=0pt] (img) {\includegraphics[width=0.4\linewidth]{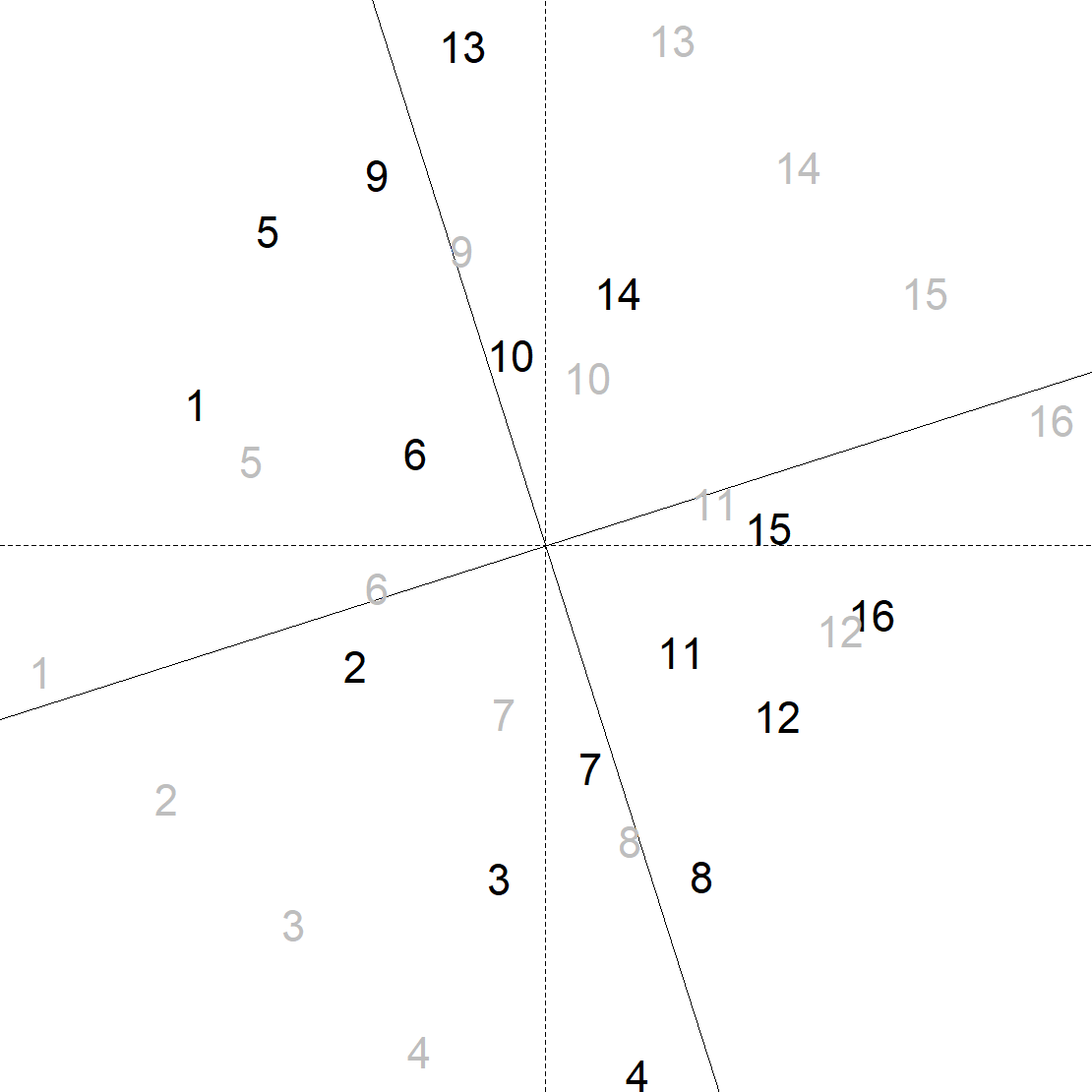}};
\node[anchor=south west] at (img.south west) {(d)};
\end{tikzpicture}
}
\caption{Design configurations in gray and recovered two-dimensional configurations in black for the rectangle data under contaminated dissimilarities. The first row shows the width–height representation: (a) cSMACOF configuration and (b) rcSMACOF configuration. The second row shows the size–shape representation: (c) cSMACOF configuration and (d) rcSMACOF configuration.}
\label{fig:figure4}
\end{figure}
 
In the width and height representation, the original design configuration in panel (a) in gray exhibits a regular grid structure. The cSMACOF configuration in black only partially preserves this organization; several rectangles are displaced from their expected neighborhoods, and the regular row and column structure is substantially distorted. In contrast, the proposed rcSMACOF configuration in panel (b) retains a much clearer approximation to the original design (up to some linear transformation), with the samples arranged in a more orderly pattern that better reflects the width and height structure. A similar pattern is observed in the size and shape representation. Configuration in gray in second row shows the transformed width and height into size and shape. Under contamination, the cSMACOF result in panel (c) is visibly unstable: several points are moved away from their natural positions, and the overall configuration no longer clearly follows the design structure. By comparison, the rcSMACOF result in panel (d) more closely follows the diagonal size and shape arrangement in the original design. The relative ordering of the rectangles is better preserved, and the configuration shows fewer severe displacements. The plot also shows the rotation between the configuration and representation necessary to make them match as closely as possible. 

\begin{table}[htbp]
\centering
\caption{Procrustes statistic (SS) between the unknown feature space and the set of known variables.}
\label{tab:rec}
\begin{tabular}{ccccc}
\hline
 & \multicolumn{2}{c}{Width and Height} & \multicolumn{2}{c}{Size and Shape} \\
\cline{2-3} \cline{4-5}
Metric & \scriptsize{cSMACOF} & \scriptsize{rcSMACOF} & \scriptsize{cSMACOF} & \scriptsize{rcSMACOF} \\
\hline
     mean&0.50 &0.01&0.47&0.02  \\
         std&0.11&0.00&0.20&0.00 \\
\hline
\end{tabular}
\end{table}

Table \ref{tab:rec} reports the nonsymmetric Procrustes error (SS) between the recovered configurations and the underlying feature space over 100 replications with independently generated outliers. The SS is computed by treating the design configuration before contamination as the target configuration, and then scaling and rotating the recovered configuration to align it as closely as possible with this target; no symmetric Procrustes transformation is used. In the width–height representation, rcSMACOF yields substantially lower error than cSMACOF, with a mean SS of 0.01 compared to 0.50, and also exhibits reduced variability (standard deviation 0.00 vs. 0.11). A similar pattern is observed in the size–shape representation, where rcSMACOF achieves a mean SS of 0.02, significantly smaller than the 0.47 obtained by cSMACOF, along with a lower standard deviation (0 vs. 0.2). These results indicate that rcSMACOF consistently provides a closer alignment to the true configuration and demonstrates greater robustness to contamination across replications.

\subsection{Face Expression}
In this section, we evaluate conditional MDS using the facial-expression dissimilarity data of \citet{abelson1962multidimensional}. The data consist of pairwise dissimilarity ratings for 13 facial expressions, obtained from 30 women who rated each pair of photographs on a nine-point scale. Abelson and Sermat showed that the three Schlosberg scales estimated by \citet{engen1958dimensional} (Pleasant--Unpleasant (PU), Attention--Rejection (AR), and Tension--Sleep (TS)) account for approximately 75\% of the variation in these dissimilarities. The absolute correlations among 3 scales are 0.18, 0.15, and 0.75 for PU--AR, PU--TS, and AR--TS, respectively.

To assess conditional recovery, we consider six known-feature scenarios. In each scenario, either one or two of the three scales \(\{\mathrm{PU},\mathrm{AR},\mathrm{TS}\}\) are provided as known features, and the remaining scale(s) are treated as unknown. For example, when PU is used as the known feature, the algorithm learns two conditional coordinates intended to recover the remaining AR--TS structure. When PU and AR are used as known features, the algorithm learns one conditional coordinate intended to recover the remaining TS structure. After fitting, we combine the known feature matrix with the learned coordinates to form the estimated full configuration. This combined configuration is then compared
with the true Schlosberg configuration \((\mathrm{PU},\mathrm{AR},\mathrm{TS})\). Recovery accuracy is evaluated using Procrustes statistic between the estimated and true configurations.

\begin{figure}[htbp]
\centering
\begin{tikzpicture}

\pgfplotsset{
    csmacofstyle/.style={
        mark=*, 
        error bars/y dir=both,
        error bars/y explicit
    },
    rcsmacofstyle/.style={
        mark=o,  
        error bars/y dir=both,
        error bars/y explicit,
        error bars/error bar style={line width=0.35pt}  
    }
}

\begin{groupplot}[
    group style={
        group size=2 by 3,
        horizontal sep=0.2cm,
        vertical sep=0.2cm
    },
    width=0.8\PlotBaseWidth, 
    xmin=0, xmax=10,
    ymin=0, ymax=1,
    axis lines=box,
    xtick={0,2,4,6,8,10},
    ytick={0,0.2,0.4,0.6,0.8},
    tick label style={font=\scriptsize},
    legend style={
        font=\tiny,
        draw=none,
        at={(0.6,0.25)},
        anchor=north east,
        row sep=-2pt
    }
]
\nextgroupplot[
    xticklabel=\empty
]
\addplot[csmacofstyle] table[
    x=contamination,
    y=pu_c,
    y error=pu_c_sd,
    col sep=comma
]{csv/face.csv};
\addlegendentry{cSMACOF}

\addplot[rcsmacofstyle] table[
    x=contamination,
    y=pu_l,
    y error=pu_l_sd,
    col sep=comma
]{csv/face.csv};
\addlegendentry{rcSMACOF}
\nextgroupplot[
    xticklabel=\empty,
    yticklabel=\empty
]
\addplot[csmacofstyle] table[
    x=contamination,
    y=ar_c,
    y error=ar_c_sd,
    col sep=comma
]{csv/face.csv};

\addplot[rcsmacofstyle] table[
    x=contamination,
    y=ar_l,
    y error=ar_l_sd,
    col sep=comma
]{csv/face.csv};
\nextgroupplot[
    xticklabel=\empty,
    yticklabel=\empty
]
\addplot[csmacofstyle] table[
    x=contamination,
    y=ts_c,
    y error=ts_c_sd,
    col sep=comma
]{csv/face.csv};

\addplot[rcsmacofstyle] table[
    x=contamination,
    y=ts_l,
    y error=ts_l_sd,
    col sep=comma
]{csv/face.csv};
\nextgroupplot[
    xticklabel=\empty,
    yticklabel=\empty
]
\addplot[csmacofstyle] table[
    x=contamination,
    y=puar_c,
    y error=puar_c_sd,
    col sep=comma
]{csv/face.csv};

\addplot[rcsmacofstyle] table[
    x=contamination,
    y=puar_l,
    y error=puar_l_sd,
    col sep=comma
]{csv/face.csv};
\nextgroupplot[
    xticklabel=\empty,
    yticklabel=\empty
]
\addplot[csmacofstyle] table[
    x=contamination,
    y=puts_c,
    y error=puts_c_sd,
    col sep=comma
]{csv/face.csv};

\addplot[rcsmacofstyle] table[
    x=contamination,
    y=puts_l,
    y error=puts_l_sd,
    col sep=comma
]{csv/face.csv};
\nextgroupplot[
    yticklabel=\empty
]
\addplot[csmacofstyle] table[
    x=contamination,
    y=arts_c,
    y error=arts_c_sd,
    col sep=comma
]{csv/face.csv};

\addplot[rcsmacofstyle] table[
    x=contamination,
    y=arts_l,
    y error=arts_l_sd,
    col sep=comma
]{csv/face.csv};
\end{groupplot}
\end{tikzpicture}
\caption{
Procrustes statistic (ss) between the estimated and true facial-expression configurations across contamination levels. Panels are ordered row-wise as follows: PU known, AR known, TS known, PU--AR known, PU--TS known, and AR--TS known. Error bars indicate standard deviations over ten replications.}
\label{fig:face_all}
\end{figure}
Fig.~\ref{fig:face_all} shows that as the contamination level increases, the Procrustes error of cSMACOF rises sharply in every panel. In contrast, rcSMACOF maintains substantially smaller Procrustes errors across all contamination levels. Averaged over the six scenarios, the cSMACOF error increases from about \(0.14\) at no contamination to about \(0.59\) at 10\% contamination, whereas the rcSMACOF error increases only from about \(0.17\) to about \(0.26\). The advantage is especially pronounced at high contamination levels, where rcSMACOF reduces the Procrustes error by roughly one half or more in all scenarios. These results indicate that the robust reweighting step effectively limits the influence of outlying dissimilarities and preserves the underlying facial-expression configuration more reliably than the standard squared-stress formulation.

\section{Discussions}
\subsection{Choice of \texorpdfstring{$\eta$}{eta}} 
To examine the sensitivity of rcSMACOF to the Fair-loss tuning parameter, we conducted an additional experiment using the simulated car-brand data. We set $\eta=\alpha s_\Delta$, where $s_\Delta$ is the median nonzero dissimilarity, and varied $\alpha$ over a logarithmic grid. For each of the three known-feature scenarios and each contamination level from 0\% to 10\%, we recorded the Procrustes statistic, and number of iterations in Figure~\ref{fig:eta_sensitivity}. The results show a clear accuracy--convergence trade-off. Very small values of $\alpha$ increase the number of iterations and often approach the maximum iteration limit. Very large values reduce the iteration count but weaken recovery. Across the three scenarios, moderate values of $\alpha$, especially around 0.01--0.1, provide the most stable balance between recovery accuracy and convergence speed. Based on this analysis, we use $\eta=0.03s_\Delta$ as the default setting in the above experiments.

\begin{figure}[htbp]
\centering
\begin{tikzpicture} 
\begin{axis}[ 
    scale only axis,
    axis lines=box,
    xmode=log,
    xmin=1e-6, xmax=3,
    ymin=0.05, ymax=0.36,
    xlabel={\(\alpha\) },
    ylabel={ss},
    xtick={1e-6,1e-5,1e-4,1e-3,1e-2,1e-1,1,3},
    xticklabels={\(10^{-6}\),\(10^{-5}\),\(10^{-4}\),\(10^{-3}\),\(10^{-2}\),\(10^{-1}\),\(1\),\(3\)},
    ytick={0.1,0.2,0.3},
    tick label style={font=\small}, 
    legend style={
        at={(0.03,0.37)},
        anchor=north west,
        font=\small,
        draw=none,
        row sep=-2pt
    }
] 
\addplot[black, solid, mark=*, line width=0.55pt]
table[x=eta_alpha, y=avg_ss_case1, col sep=comma]{csv/eta.csv};
\addlegendentry{Case 1}
\addplot[black, solid, mark=square*, line width=0.55pt]
table[x=eta_alpha, y=avg_ss_case2, col sep=comma]{csv/eta.csv};
\addlegendentry{Case 2}
\addplot[black, solid, mark=triangle*, line width=0.55pt]
table[x=eta_alpha, y=avg_ss_case3, col sep=comma]{csv/eta.csv};
\addlegendentry{Case 3}
\end{axis} 
\begin{axis}[ 
    scale only axis,
    xmode=log,
    xmin=1e-6, xmax=3,
    ymin=0, ymax=10000,
    axis y line*=right,
    axis x line=none,
    ylabel={iterations},
    ytick={0,2500,5000,7500,10000},
    tick label style={font=\small}, 
    ylabel near ticks
] 
\addplot[black, dashed, mark=*, line width=0.55pt, forget plot]
table[x=eta_alpha, y=avg_iter_case1, col sep=comma]{csv/eta.csv};
\addplot[black, dashed, mark=square*, line width=0.55pt, forget plot]
table[x=eta_alpha, y=avg_iter_case2, col sep=comma]{csv/eta.csv};
\addplot[black, dashed, mark=triangle*, line width=0.55pt, forget plot]
table[x=eta_alpha, y=avg_iter_case3, col sep=comma]{csv/eta.csv};
\end{axis}
\end{tikzpicture}
\caption{
Marker shapes distinguish the three known-feature scenarios. Solid curves report the average Procrustes statistic (ss, left axis), while dashed curves report the average number of iterations (right axis).
}
\label{fig:eta_sensitivity}
\end{figure}

\subsection{Convergence-speed experiment} 
To evaluate computational convergence, we record the number of iterations required by cSMACOF and rcSMACOF to satisfy the stopping criterion. For each known-feature scenario and each contamination level, we generate contaminated dissimilarity matrices using the same procedure as in the accuracy experiment. Both algorithms are initialized from the same starting configuration for each replicate. cSMACOF is run until the absolute change in normalized conditional stress is below \(10^{-15}\) or until the maximum of 10,000 iterations is reached. rcSMACOF is run with the same outer stopping tolerance and \(\eta=0.01\), using one Conditional SMACOF update between consecutive robust weight updates. For each method, we record the iteration count at convergence, the final objective value, and whether the maximum iteration limit is reached.

\begin{figure}[htbp]
\centering 
\begin{tikzpicture}  
\begin{axis}[  
    axis lines=box, 
    xmin=100, xmax=900,
    ymin=0, ymax=1, 
    xlabel={Number of iterations},
    ylabel={ss}, 
    xtick={100,300,500,700,900},
    ytick={0.2,0.4,0.6,0.8,1},
    legend style={
        at={(0.97,0.57)},
        anchor=north east,
        font=\scriptsize,
        draw=none,
        row sep=-2pt
    }
]
\addplot[
    mark=diamond*,
] table [
    x=cmds10_it,
    y=cmds10_ss_avg,
    col sep=comma
]{csv/iterations.csv};
\addlegendentry{cSMACOF, 10\%}
\addplot[
    mark=diamond,
] table [
    x=lcmds10_it,
    y=lcmds10_ss_avg,
    col sep=comma
]{csv/iterations.csv};
\addlegendentry{rcSMACOF, 10\%}
\addplot[
    mark=square*,
] table [
    x=cmds5_it,
    y=cmds5_ss_avg,
    col sep=comma
]{csv/iterations.csv};
\addlegendentry{cSMACOF, 5\%}
\addplot[
    mark=square,
] table [
    x=lcmds5_it,
    y=lcmds5_ss_avg,
    col sep=comma
]{csv/iterations.csv};
\addlegendentry{rcSMACOF, 5\%} 
\end{axis}
\end{tikzpicture}
\caption{
Procrustes statistic (ss) versus the number of iterations under 5\% and 10\% contaminated dissimilarities. Lower values indicate better recovery.
}
\label{fig:iterations}
\end{figure}

Fig.~\ref{fig:iterations} compares the Procrustes statistic of cSMACOF and rcSMACOF as a function of the number of iterations under 5\% and 10\% contaminated dissimilarities. The standard cSMACOF procedure remains sensitive to contamination: its error stays large and unchanged across iterations, indicating convergence to a poor configuration under outlier contamination. In contrast, rcSMACOF rapidly reduces the Procrustes error and stabilizes at substantially smaller errors within approximately 300 outer iterations. 

\section{Conclusion}  
\label{sec:conclusion}

This paper introduced Robust Conditional Multidimensional Scaling, a robust extension of conditional dimensional reduction for proximity data with observed covariates. The proposed formulation addresses the sensitivity of squared conditional stress to contaminated dissimilarities by replacing the squared residual loss with a Fair M-estimator. This choice preserves the robust behavior of absolute-residual fitting while avoiding the singular weights that arise in the formal \(\ell_1\) IRLS formulation. We also developed a Robust Conditional SMACOF algorithm solve the the problem.  We also established that the rcMDS is NP-hard. The numerical experiments demonstrate that rcSMACOF is robust to corrupted dissimilarities than standard cSMACOF. The study also demonstrates that small parameter $\alpha$ values can slow convergence, while overly large values reduce robustness. And we also provide a recommendation of the $\alpha$.

Overall, rcMDS provides a robust and computationally tractable framework for conditional dimension reduction from proximity data. It retains the interpretability and covariate-adjustment advantages of conditional MDS while improving reliability in the presence of corrupted pairwise dissimilarities. Future work may consider extensions to nonmetric dissimilarity transformations, and scalable implementations for large proximity matrices. 
 
\bibliographystyle{plainnat}
\bibliography{ref}

\end{document}